\renewcommand{\paragraph}[1]{\textbf{#1}~~}
\theoremstyle{plain}
\newtheorem{theorem}{Theorem}
\newtheorem{lemma}{Lemma}
\newtheorem{corollary}{Corollary}
\newtheorem{assumption}{Assumption}
\newtheorem{remark}{Remark}
\crefname{equation}{}{} 
\crefname{section}{Sec.}{Sec.}
\crefname{enumi}{}{}
\crefname{assumption}{Assumption}{Assumptions}
\newcommand{\mb}[1]{\ensuremath{\mathbf{#1}}}
\newcommand{\xb}{\ensuremath{\mb{x}}}
\newcommand{\ab}{\ensuremath{\mb{z}}}
\newcommand{\ub}{\ensuremath{\mb{u}}}
\newcommand{\argmax}{\operatornamewithlimits{argmax}}
\newcommand{\argmin}{\operatornamewithlimits{argmin}}
\newcommand{\trace}{\ensuremath{\mathrm{trace}}}
\newcommand{\T}{\mathrm{T}}
\newcommand{\ndata}{\ensuremath{n}}
\newcommand{\ti}{\ensuremath{t}}
\newcommand{\nstate}{\ensuremath{q}}
\newcommand{\ninp}{\ensuremath{p}}
\newcommand{\Rlevel}{\ensuremath{R^\mathrm{lev}}}
\newcommand{\Rsafe}{\ensuremath{R^\mathrm{dec}}}
\newcommand{\Reps}{\ensuremath{R_\epsilon}}
\newcommand{\Rbar}{\ensuremath{\overline{R}\vphantom{R}}}
\newcommand{\DecSet}{\ensuremath{\mathcal{D}}}
\newcommand{\SafeSet}{\ensuremath{\mathcal{S}}}
\newcommand{\eqdef}{\mathrel{:}=}
\title{Safe Model-based Reinforcement Learning with Stability Guarantees}
\author{
  Felix Berkenkamp \\
  Department of Computer Science \\
  ETH Zurich \\
  \texttt{befelix@inf.ethz.ch} \\
  \And
  Matteo Turchetta \\
  Department of Computer Science,  \\
  ETH Zurich \\
  \texttt{matteotu@inf.ethz.ch} \\
  \AND
  Angela P. Schoellig \\
  Institute for Aerospace Studies \\
  University of Toronto \\
  \texttt{schoellig@utias.utoronto.ca} \\
  \And
  Andreas Krause \\
  Department of Computer Science \\
  ETH Zurich \\
  \texttt{krausea@ethz.ch}
}
\begin{document}

\maketitle


\begin{abstract}
Reinforcement learning is a powerful paradigm for learning optimal policies from experimental data. However, to find optimal policies, most reinforcement learning algorithms explore all possible actions, which may be harmful for real-world systems. As a consequence, learning algorithms are rarely applied on safety-critical systems in the real world. In this paper, we present a learning algorithm that explicitly considers safety, defined in terms of stability guarantees. Specifically, we extend control-theoretic results on Lyapunov stability verification and show how to use statistical models of the dynamics to obtain high-performance control policies with provable stability certificates. Moreover, under additional regularity assumptions in terms of a Gaussian process prior, we prove that one can effectively and safely collect data in order to learn about the dynamics and thus both improve control performance and expand the safe region of the state space. In our experiments, we show how the resulting algorithm can safely optimize a neural network policy on a simulated inverted pendulum, without the pendulum ever falling down.
\end{abstract}


\section{Introduction}
\label{sec:introduction}

While reinforcement learning (RL,~\cite{Sutton1998Reinforcement}) algorithms have achieved impressive results in games, for example on the Atari platform~\citep{Mnih2015Humanlevel}, they are rarely applied to real-world physical systems (e.g., robots) outside of academia. The main reason is that RL algorithms provide optimal policies only in the long-term, so that intermediate policies may be unsafe, break the system, or harm their environment. This is especially true in safety-critical systems that can affect human lives. Despite this, safety in RL has remained largely an open problem~\citep{Amodei2016Concrete}.

Consider, for example, a self-driving car. While it is desirable for the algorithm that drives the car to improve over time (e.g., by adapting to driver preferences and changing environments), any policy applied to the system has to guarantee safe driving. Thus, it is not possible to learn about the system through random exploratory actions, which almost certainly lead to a crash. In order to avoid this problem, the learning algorithm needs to consider its ability to safely recover from exploratory actions. In particular, we want the car to be able to recover to a safe state, for example, driving at a reasonable speed in the middle of the lane. This ability to recover is known as \emph{asymptotic stability} in control theory~\citep{Khalil1996Nonlinear}. Specifically, we care about the \emph{region of attraction} of the closed-loop system under a policy. This is a subset of the state space that is forward invariant so that any state trajectory that starts within this set stays within it for all times and converges to a goal state eventually.

In this paper, we present a RL algorithm for continuous state-action spaces that provides these kind of high-probability safety guarantees for policies. In particular, we show how, starting from an initial, safe policy we can expand our estimate of the region of attraction by collecting data inside the safe region and adapt the policy to both increase the region of attraction and improve control performance.

\paragraph{Related work}
Safety is an active research topic in RL and different definitions of safety exist~\citep{Pecka2014Safe,Garcia2015Comprehensive}. \emph{Discrete} Markov decision processes (MDPs) are one class of tractable models that have been analyzed. In risk-sensitive RL, one
specifies risk-aversion in the reward~\citep{Coraluppi1999Risksensitive}. For example, \citep{Geibel2005RiskSensitive} define risk as the probability of driving the agent to a set of known, undesirable states. Similarly, robust MDPs maximize rewards when transition probabilities are uncertain~\citep{Tamar2014Scaling,Wiesemann2012Robust}. Both \citep{Moldovan2012Safe} and \citep{Turchetta2016Safe} introduce algorithms to safely explore MDPs so that the agent never gets stuck without safe actions. All these methods require an accurate probabilistic model of the system.

In \emph{continuous} state-action spaces, model-free policy search algorithms have been successful. These update policies without a system model by repeatedly executing the same task~\citep{Peters2006Policy}. In this setting,~\citep{Achiam2017Constrained} introduces safety guarantees in terms of constraint satisfaction that hold in expectation. High-probability worst-case safety guarantees are available for methods based on Bayesian optimization~\citep{Mockus1989Bayesian} together with Gaussian process models (GP,~\citep{Rasmussen2006Gaussian}) of the cost function. The algorithms in~\citep{Schreiter2015Safe} and~\citep{Sui2015Safe} provide high-probability safety guarantees for any parameter that is evaluated on the real system. These methods are used in~\citep{Berkenkamp2016Safe} to safely optimize a parametric control policy on a quadrotor. However, resulting policies are task-specific and require the system to be reset.

In the \emph{model-based} RL setting, research has focused on safety in terms of state constraints. In \citep{Garcia2012Safe,Hans2008Safe}, \textit{a priori} known, safe global backup policies are used, while~\citep{Perkins2003Lyapunov} learns to switch between several safe policies. However, it is not clear how one may find these policies in the first place. Other approaches use model predictive control with constraints, a model-based technique where the control actions are optimized online. For example,~\citep{Sadigh2016Safe} models uncertain environmental constraints, while~\citep{Ostafew2016Robust} uses approximate uncertainty propagation of GP dynamics along trajectories. In this setting, robust feasability and constraint satisfaction can be guaranteed for a learned model with bounded errors using robust model predictive control~\cite{Aswani2013Provably}. The method in~\citep{Akametalu2014Reachability} uses reachability analysis to construct safe regions in the state space. The theoretical guarantees depend on the solution to a partial differential equation, which is approximated.

Theoretical guarantees for the stability exist for the more tractable stability analysis and verification under a \emph{fixed} control policy. In control, stability of a known system can be verified using a Lyapunov function~\citep{Bobiti2016Sampling}. A similar approach is used by~\citep{Berkenkamp2016Lyapunov} for deterministic, but unknown dynamics that are modeled as a GP, which allows for provably safe learning of regions of attraction for fixed policies. Similar results are shown in \citep{Vinogradska2016Stability} for stochastic systems that are modeled as a GP\@. They use Bayesian quadrature to compute provably accurate estimates of the region of attraction. These approaches do not update the policy.

\paragraph{Our contributions}
We introduce a novel algorithm that can safely optimize policies in continuous state-action spaces while providing high-probability safety guarantees in terms of stability. Moreover, we show that it is possible to exploit the regularity properties of the system in order to \emph{safely learn} about the dynamics and thus improve the policy and increase the estimated safe region of attraction without ever leaving it. Specifically, starting from a policy that is known to stabilize the system locally, we gather data at informative, safe points and improve the policy safely based on the improved model of the system and prove that any exploration algorithm that gathers data at these points reaches a natural notion of \emph{full exploration}. We show how the theoretical results transfer to a \emph{practical algorithm} with safety guarantees and apply it to a simulated inverted pendulum stabilization task.


\section{Background and Assumptions}
\label{sec:problem}

We consider a deterministic, discrete-time dynamic system
\begin{equation}
  \xb_{\ti+1} = f(\xb_\ti, \ub_\ti) = h(\xb_\ti, \ub_\ti) + g(\xb_\ti, \ub_\ti),
\label{eq:dynamic_system}
\end{equation}
with states ${\xb \in \mathcal{X} \subset \mathbb{R}^\nstate}$ and control actions ${\ub \in \mathcal{U} \subset \mathbb{R}^\ninp}$ and a discrete time index~${t \in \mathbb{N}}$. The true dynamics~$f \colon \mathcal{X} \times \mathcal{U} \to \mathcal{X}$ consist of two parts: $h(\xb_\ti, \ub_\ti)$ is a known, prior model that can be obtained from first principles, while $g(\xb_\ti, \ub_\ti)$ represents \textit{a priori} unknown model errors. While the model errors are unknown, we can obtain noisy measurements of $f(\xb, \ub)$ by driving the system to the state~$\xb$ and taking action~$\ub$. We want this system to behave in a certain way, e.g., the car driving on the road. To this end, we need to specify a control policy~$\pi \colon \mathcal{X} \to \mathcal{U}$ that, given the current state, determines the appropriate control action that drives the system to some goal state, which we set as the origin without loss of generality~\citep{Khalil1996Nonlinear}. We encode the performance requirements of how to drive the system to the origin through a positive cost~$r(\xb,\ub)$ that is associated with states and actions and has~$r(\mathbf{0}, \mathbf{0}) = 0$. The policy aims to  minimize the cumulative, discounted costs for each starting state.

The goal is to safely learn about the dynamics from measurements and adapt the policy for performance, without encountering system failures. Specifically, we define the safety constraint on the state divergence that occurs when leaving the region of attraction. This means that adapting the policy is not allowed to decrease the region of attraction and exploratory actions to learn about the dynamics~$f(\cdot)$ are not allowed to drive the system outside the region of attraction. The region of attraction is \textit{not known a priori}, but is implicitly defined through the system dynamics and the choice of policy. Thus, the policy not only defines performance as in typical RL, but also determines safety and where we can obtain measurements.

\paragraph{Model assumptions}
In general, this kind of safe learning is impossible without further assumptions. For example, in a discontinuous system even a slight change in the control policy can lead to drastically different behavior. Moreover, to expand the safe set we need to generalize learned knowledge about the dynamics to (potentially unsafe) states that we have not visited. To this end, we restrict ourselves to the general and practically relevant class of models that are Lipschitz continuous. This is a typical assumption in the control community~\cite{Khalil1996Nonlinear}.
Additionally, to ensure that the closed-loop system remains Lipschitz continuous when the control policy is applied, we restrict policies to the rich class of $L_\pi$-Lipschitz continuous functions~$\Pi_L$, which also contains certain types of neural networks~\citep{Szegedy2014Intriguing}.
\begin{assumption}[continuity]
\label{as:lipschitz_continuity}
  The dynamics $h(\cdot)$ and $g(\cdot)$ in~\cref{eq:dynamic_system} are $L_h$- and $L_g$ Lipschitz continuous with respect to the 1-norm. The considered control policies~$\pi$ lie in a set~$\Pi_L$ of functions that are~$L_\pi$-Lipschitz continuous with respect to the 1-norm.
\end{assumption}
To enable safe learning, we require a reliable statistical model. While we commit to GPs for the exploration analysis, for safety any suitable, well-calibrated model is applicable.
\begin{assumption}[well-calibrated model]
\label{as:f_confidence_interval}
Let~$\mu_\ndata(\cdot)$ and~$\Sigma_\ndata(\cdot)$ denote the posterior mean and covariance matrix functions of the statistical model of the dynamics~\cref{eq:dynamic_system} conditioned on~$\ndata$ noisy measurements. With~${\sigma_\ndata(\cdot) = \trace(\Sigma^{1/2}_\ndata(\cdot))}$, there exists a~$\beta_\ndata>0$ such that with probability at least $(1-\delta)$ it holds for all~${\ndata \geq 0}$, ${\xb \in \mathcal{X}}$, and~${\ub \in \mathcal{U}}$ that
${
  \| f(\xb, \ub) - \mu_{\ndata}(\xb, \ub) \|_1 \leq \beta_\ndata \sigma_{\ndata}(\xb, \ub).
}$
\end{assumption}
This assumption ensures that we can build confidence intervals on the dynamics that, when scaled by an appropriate constant~$\beta_\ndata$, cover the true function with high probability. We introduce a specific statistical model that fulfills both assumptions under certain regularity assumptions in~\cref{sec:stability}.

\paragraph{Lyapunov function}
To satisfy the specified safety constraints for safe learning, we require a tool to determine whether individual states and actions are safe. In control theory, this safety is defined through the region of attraction, which can be computed for a fixed policy using Lyapunov functions~\citep{Khalil1996Nonlinear}. Lyapunov functions are continuously differentiable functions~${v \colon \mathcal{X} \to \mathbb{R}_{\geq 0} }$ with~${v(\mb{0}) = 0}$ and ${ v(\xb) > 0 }$ for all ${ \xb \in \mathcal{X} \setminus \{\mb{0}\} }$. The key idea behind using Lyapunov functions to show stability of the system~\cref{eq:dynamic_system} is similar to that of gradient descent on strictly quasiconvex functions: if one can show that, given a policy~$\pi$, applying the dynamics $f$ on the state maps it to strictly smaller values on the Lyapunov function (`going downhill'), then the state eventually converges to the  equilibrium point at the origin (minimum). In particular, the assumptions in~\cref{thm:lyapunov_stability} below imply that $v$ is strictly quasiconvex within the region of attraction if the dynamics are Lipschitz continuous. As a result, the one step decrease property for all states within a level set guarantees eventual convergence to the origin.
\begin{theorem}[\citep{Khalil1996Nonlinear}]
\label{thm:lyapunov_stability}
\label{thm:region_of_attraction}
  Let $v$ be a Lyapunov function, $f$ Lipschitz continuous dynamics, and $\pi$ a policy. If ${ v(f(\xb, \pi(\xb))) < v(\xb) }$ for all $\xb$ within the level set $\mathcal{V}(c) = \{ \xb \in \mathcal{X} \setminus \{\mb{0} \} \,|\, v(\xb) \leq c \}$, ${c>0}$, then $\mathcal{V}(c)$ is a region of attraction, so that $\xb_0 \in \mathcal{V}(c)$ implies $\xb_\ti \in \mathcal{V}(c)$ for all $\ti > 0$ and $\lim_{t \to \infty} \xb_\ti = \mb{0}$.
\end{theorem}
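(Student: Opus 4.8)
The plan is to prove the two conclusions---forward invariance of $\mathcal{V}(c)$ and convergence $\xb_\ti \to \mb{0}$---separately, both driven by the strict one-step decrease hypothesis together with the positive-definiteness of $v$. Throughout I treat the origin as the equilibrium, so that the closed sublevel set $\{\xb \in \mathcal{X} : v(\xb) \le c\}$ is forward invariant and the exclusion of $\mb{0}$ in the definition of $\mathcal{V}(c)$ serves only to state strict positivity of the decrease. I also use that $\mathcal{V}(c)$ is compact, which follows from $v$ being continuous with bounded sublevel sets; this is the standing regularity behind calling $v$ a Lyapunov function.

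For forward invariance I would argue by induction on $\ti$. If $\xb_\ti \in \mathcal{V}(c)$ and $\xb_\ti \neq \mb{0}$, then the hypothesis gives $v(\xb_{\ti+1}) = v(f(\xb_\ti, \pi(\xb_\ti))) < v(\xb_\ti) \le c$, so $\xb_{\ti+1} \in \{v \le c\}$; if $\xb_\ti = \mb{0}$ the trajectory has reached the equilibrium and stays there. Hence $\xb_0 \in \mathcal{V}(c)$ implies $\xb_\ti \in \mathcal{V}(c)$ for all $\ti$. The same computation shows the scalar sequence $v(\xb_\ti)$ is nonincreasing, and strictly decreasing as long as $\xb_\ti \neq \mb{0}$; since $v \ge 0$ it is bounded below, so it converges to some limit $c^\ast \ge 0$.

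The core of the argument is to show $c^\ast = 0$, and here the strictness of the inequality must be upgraded to a \emph{uniform} decrease. Suppose $c^\ast > 0$. Then the whole trajectory lies in the annulus $K = \{\xb \in \mathcal{X} : c^\ast \le v(\xb) \le c\}$, which is closed and bounded, hence compact, and excludes $\mb{0}$. The map $\xb \mapsto v(\xb) - v(f(\xb, \pi(\xb)))$ is continuous (the continuous $v$ composed with the Lipschitz closed-loop dynamics of \cref{as:lipschitz_continuity}) and, by hypothesis, strictly positive on $K$; by compactness it attains a minimum $\alpha > 0$. Then $v(\xb_{\ti+1}) \le v(\xb_\ti) - \alpha$, so $v(\xb_\ti) \le v(\xb_0) - \ti\alpha \to -\infty$, contradicting $v \ge 0$. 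Therefore $c^\ast = 0$ and $v(\xb_\ti) \to 0$. To convert this into $\xb_\ti \to \mb{0}$, I use positive-definiteness: for any open ball $B$ around the origin, the set $\mathcal{V}(c) \setminus B$ is compact and excludes $\mb{0}$, so $v$ attains a positive minimum $\eta$ there; once $v(\xb_\ti) < \eta$ the iterate must lie in $B$, and since $B$ is arbitrary, $\xb_\ti \to \mb{0}$.

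The main obstacle is the step ruling out $c^\ast > 0$: pointwise strict decrease alone does not prevent the decrements from shrinking to zero, so the essential work is the compactness-plus-continuity argument that produces a uniform decrease rate $\alpha$ on the annulus. This is exactly where the continuity of the dynamics and policy (\cref{as:lipschitz_continuity}) and the boundedness of the sublevel set enter; without them convergence can fail even though invariance still holds.
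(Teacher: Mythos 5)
The paper does not prove this statement itself---it is quoted from Khalil's textbook---so there is no in-paper argument to compare against; your proof is the standard discrete-time Lyapunov argument and is correct. You correctly identify and handle the one genuinely non-trivial step (upgrading pointwise strict decrease to a uniform decrease rate $\alpha$ on the compact annulus $\{c^\ast \le v \le c\}$ via continuity of the closed-loop map, which is what rules out $v(\xb_\ti)$ stalling at a positive limit), and you rightly flag the two implicit standing assumptions the paper relies on: that the origin is an equilibrium of the closed loop and that sublevel sets of $v$ in the compact set $\mathcal{X}$ are compact.
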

It is convenient to characterize the region of attraction through a level set of the Lyapunov function, since it replaces the challenging test for convergence with a one-step decrease condition on the Lyapunov function. For the theoretical analysis in this paper, we assume that a Lyapunov function is given to determine the region of attraction. For ease of notation, we also assume ${\partial v(\xb) / \partial \xb \neq \mb{0} }$ for all ${\xb \in \mathcal{X} \setminus \mb{0}}$, which ensures that level sets~$\mathcal{V}(c)$ are connected if~${c>0}$. Since Lyapunov functions are continuously differentiable, they are $L_v$-Lipschitz continuous over the compact set $\mathcal{X}$.

In general, it is not easy to find suitable Lyapunov functions. However, for physical models, like the prior model $h$ in~\cref{eq:dynamic_system}, the energy of the system (e.g., kinetic and potential for mechanical systems) is a good candidate Lyapunov function. Moreover, it has recently been shown that it is possible to compute suitable Lyapunov functions~\citep{Li2016Computation,Giesl2015Review}. In our experiments, we exploit the fact that value functions in RL are Lyapunov functions if the costs are strictly positive away from the origin. This follows directly from the definition of the value function, where $v(\xb) = r(\xb,\pi(\xb)) + v(f(\xb, \pi(\xb)) \leq v(f(\xb, \pi(\xb)))$. Thus, we can obtain Lyapunov candidates as a by-product of approximate dynamic programming.

\paragraph{Initial safe policy}
Lastly, we need to ensure that there exists a safe starting point for the learning process. Thus, we assume that we have an initial policy~$\pi_0$ that renders the origin of the system in~\cref{eq:dynamic_system} asymptotically stable within some small set of states~$\mathcal{S}^x_0$. For example, this policy may be designed using the prior model~$h$ in~\cref{eq:dynamic_system}, since most models are locally accurate but deteriorate in quality as state magnitude increases. This policy is explicitly \textit{not safe} to use throughout the state space~$\mathcal{X} \setminus \mathcal{S}^x_0$.


\section{Theory}
\label{sec:stability}

In this section, we use these assumptions for safe reinforcement learning. We start by computing the region of attraction for a fixed policy under the statistical model. Next, we optimize the policy in order to expand the region of attraction. Lastly, we show that it is possible to safely learn about the dynamics and, under additional assumptions about the model and the system's reachability properties, that this approach expands the estimated region of attraction safely. We consider an idealized algorithm that is amenable to analysis, which we convert to a practical variant in~\cref{sec:experiments}. See~\cref{fig:example_set} for an illustrative run of the algorithm and examples of the sets defined below.

\paragraph{Region of attraction}
We start by computing the region of attraction for a fixed policy.
This is an extension of the method in~\citep{Berkenkamp2016Lyapunov} to discrete-time systems. We want to use the Lyapunov decrease condition in~\cref{thm:region_of_attraction} to guarantee safety for the statistical model of the dynamics. However, the posterior uncertainty in the statistical model of the dynamics means that one step predictions about~$v(f(\cdot))$ are uncertain too. We account for this by constructing high-probability confidence intervals on~$v(f(\xb, \ub))$:
$
  \mathcal{Q}_\ndata(\xb, \ub)
  \eqdef
   [ v(\mu_{\ndata-1}(\xb, \ub)) \pm L_v \beta_\ndata \sigma_{\ndata-1}(\xb, \ub) ] .
$
From~\cref{as:f_confidence_interval} together with the Lipschitz property of~$v$, we know that $v(f(\xb, \ub))$ is contained in $\mathcal{Q}_\ndata(\xb, \ub)$ with probability at least~$(1 - \delta)$. For our exploration analysis, we need to ensure that safe state-actions cannot become unsafe; that is, an initial set of safe set~$\mathcal{S}_0$ remains safe (defined later).
To this end, we intersect the confidence intervals: ${\mathcal{C}_\ndata(\xb, \ub) \eqdef \mathcal{C}_{\ndata - 1} \cap \mathcal{Q}_\ndata(\xb, \ub)}$, where the set~$\mathcal{C}$ is initialized to ${\mathcal{C}_0(\xb, \ub) = (-\infty, v(\xb) - L_{\Delta v} \tau ) }$ when ${(\xb, \ub) \in \SafeSet_0}$ and ${\mathcal{C}_0(\xb, \ub) = \mathbb{R}}$ otherwise.
Note that $v(f(\xb, \ub))$ is contained in~$\mathcal{C}_\ndata(\xb, \ub)$ with the same $(1-\delta)$ probability as in~\cref{as:f_confidence_interval}.
The upper and lower bounds on $v(f(\cdot))$ are defined as $u_\ndata(\xb, \ub) \eqdef \max \mathcal{C}_\ndata(\xb, \ub)$ and $l_\ndata(\xb, \ub) \eqdef \min \mathcal{C}_\ndata(\xb, \ub)$.

Given these high-probability confidence intervals, the system is stable according to~\cref{thm:lyapunov_stability} if $v(f(\xb, \ub)) \leq u_n(\xb) < v(\xb) $ for all~$\xb \in \mathcal{V}(c)$. However, it is intractable to verify this condition directly on the continuous domain without additional, restrictive assumptions about the model. Instead, we consider a discretization of the state space~$\mathcal{X}_\tau \subset \mathcal{X}$ into cells, so that $\| \xb - {[\xb]}_\tau \|_1 \leq \tau$ holds for all $\xb \in \mathcal{X}$. Here, ${[\xb]}_\tau$ denotes the point in $\mathcal{X}_\tau$ with the smallest $l_1$ distance to~$\xb$. Given this discretization, we bound the decrease variation on the Lyapunov function for states in~$\mathcal{X}_\tau$ and use the Lipschitz continuity to generalize to the continuous state space~$\mathcal{X}$.

\begin{restatable}{theorem}{gproa}
  Under~\cref{as:lipschitz_continuity,as:f_confidence_interval} with~${L_{\Delta v} \eqdef L_v L_f (L_\pi + 1) + L_v}$, let~$\mathcal{X}_\tau$ be a discretization of $\mathcal{X}$ such that ${\| \xb - {[\xb]}_\tau \|_1 \leq \tau }$ for all~ ${\xb \in \mathcal{X}}$.
    If, for all ${\xb \in \mathcal{V}(c) \cap \mathcal{X}_\tau}$ with $c>0$, $\ub = \pi(\xb)$, and for some~$n \geq 0$ it holds that
${
  u_\ndata(\xb, \ub)  < v(\xb) - L_{\Delta v} \tau,
}$
then $v(f(\xb, \pi(\xb))) < v(\xb)$ holds for all ${\xb \in \mathcal{V}(c)}$ with probability at least $(1 - \delta)$ and $\mathcal{V}(c)$ is a region of attraction for~\cref{eq:dynamic_system} under policy~$\pi$.
\label{eq:safety_constraint}
\label{thm:gp_region_of_attraction}
\end{restatable}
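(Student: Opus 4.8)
The plan is to certify the continuous-domain decrease hypothesis of \cref{thm:lyapunov_stability} by combining the high-probability confidence bounds with a Lipschitz argument that transfers the decrease, which is only verified on the discrete points $\mathcal{V}(c) \cap \mathcal{X}_\tau$, to the whole level set $\mathcal{V}(c)$; the slack term $L_{\Delta v}\tau$ is exactly what is needed to absorb the discretization error. The first step is to pin down the deterministic consequence of \cref{as:f_confidence_interval}, namely that $v(f(\xb, \ub)) \leq u_n(\xb, \ub)$ holds uniformly with probability at least $1-\delta$. Since $v$ is $L_v$-Lipschitz, $\|f(\xb,\ub) - \mu_{n-1}(\xb,\ub)\|_1 \leq \beta_n \sigma_{n-1}(\xb,\ub)$ implies $|v(f(\xb,\ub)) - v(\mu_{n-1}(\xb,\ub))| \leq L_v \beta_n \sigma_{n-1}(\xb,\ub)$, so $v(f(\xb,\ub)) \in \mathcal{Q}_n(\xb,\ub)$. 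As the same reasoning places $v(f(\xb,\ub))$ in every prior interval and in $\mathcal{C}_0$, it lies in the intersection $\mathcal{C}_n(\xb,\ub)$, whence $v(f(\xb,\ub)) \leq u_n(\xb,\ub)$. All that follows is conditioned on this $(1-\delta)$-event.

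The second step is to identify $L_{\Delta v}$ as the Lipschitz constant of the one-step decrease $w(\xb) \eqdef v(f(\xb,\pi(\xb))) - v(\xb)$. The map $\xb \mapsto (\xb, \pi(\xb))$ is $(1+L_\pi)$-Lipschitz in the 1-norm because $\pi \in \Pi_L$; composing with the $L_f$-Lipschitz dynamics $f = h + g$ (so that $L_f = L_h + L_g$) and then with the $L_v$-Lipschitz function $v$ shows that $\xb \mapsto v(f(\xb,\pi(\xb)))$ is $L_v L_f (1+L_\pi)$-Lipschitz. Adding the $L_v$-Lipschitz term $-v(\xb)$ gives precisely $L_{\Delta v} = L_v L_f(L_\pi + 1) + L_v$, matching the definition in the statement.

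The third step combines the two. For an arbitrary $\xb \in \mathcal{V}(c)$, I would pass to its nearest grid point $[\xb]_\tau$ and write $w(\xb) \leq w([\xb]_\tau) + L_{\Delta v}\|\xb - [\xb]_\tau\|_1 \leq w([\xb]_\tau) + L_{\Delta v}\tau$. At the grid point the hypothesis together with the confidence bound yields $w([\xb]_\tau) \leq u_n([\xb]_\tau, \pi([\xb]_\tau)) - v([\xb]_\tau) < -L_{\Delta v}\tau$, so the two $L_{\Delta v}\tau$ contributions cancel and $w(\xb) < 0$, i.e. $v(f(\xb,\pi(\xb))) < v(\xb)$. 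Having established strict decrease on all of $\mathcal{V}(c)$, I invoke \cref{thm:lyapunov_stability} to conclude that $\mathcal{V}(c)$ is a region of attraction for \cref{eq:dynamic_system} under $\pi$, which also re-establishes the forward invariance $v(f(\xb,\pi(\xb))) < v(\xb) \leq c$.

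The delicate part is the interface between the discrete verification set and the continuous level set in the third step: the hypothesis only supplies decrease at grid points lying \emph{inside} $\mathcal{V}(c)$, yet for $\xb$ near the boundary $v(\xb) = c$ the nearest grid point may have $v([\xb]_\tau) > c$ and hence fall outside $\mathcal{V}(c) \cap \mathcal{X}_\tau$, where no bound is assumed. I would resolve this by arguing that it suffices to certify the invariant set generated by the verified cells, i.e. by taking the region of attraction to be the union of cells whose centers lie in $\mathcal{V}(c)$, and using connectedness of the level sets (guaranteed by $\partial v(\xb)/\partial \xb \neq \mb{0}$) so that no trajectory can escape through the thin boundary shell. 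Getting this boundary bookkeeping right while keeping the slack accounting tight is the main obstacle; by comparison, the confidence-interval step and the Lipschitz composition are routine.
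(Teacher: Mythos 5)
Your proposal is correct and follows essentially the same route as the paper: the paper packages your confidence-interval and Lipschitz-transfer steps into \cref{lem:v_decrease_confidence}, which bounds $\big|v(\mu_{\ndata-1}({[\ab]}_\tau)) - v({[\xb]}_\tau) - \big(v(f(\ab)) - v(\xb)\big)\big|$ by $L_v\beta_\ndata\sigma_{\ndata-1}({[\ab]}_\tau) + L_{\Delta v}\tau$ via the identical decomposition (GP confidence term plus $L_v L_f(1+L_\pi)\tau$ plus $L_v\tau$), and then concludes with \cref{thm:lyapunov_stability}. The boundary subtlety you flag in your final paragraph --- that ${[\xb]}_\tau$ for $\xb$ near the level-set boundary may fall outside $\mathcal{V}(c)\cap\mathcal{X}_\tau$ --- is genuine, but the paper's own proof does not treat it either (it implicitly assumes the nearest grid point of every $\xb\in\mathcal{V}(c)$ satisfies the hypothesis), so your attempt is not missing anything the paper supplies.
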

The proof is given in~\cref{sec:proofs_stability}. \cref{thm:gp_region_of_attraction} states that, given confidence intervals on the statistical model of the dynamics, it is sufficient to check the stricter decrease condition in~\cref{eq:safety_constraint} on the discretized domain~$\mathcal{X}_\tau$ to guarantee the requirements for the region of attraction in the continuous domain in~\cref{thm:region_of_attraction}. The bound in~\cref{eq:safety_constraint} becomes tight as the discretization constant~$\tau$ and~$|v(f(\cdot)) - u_n(\cdot)|$ go to zero. Thus, the discretization constant trades off computation costs for accuracy, while~$u_n$ approaches~$v(f(\cdot))$ as we obtain more measurement data and the posterior model uncertainty about the dynamics,~$\sqrt{\beta_\ndata} \sigma_\ndata$ decreases. The confidence intervals on~$v(f(\xb, \pi(\xb)) - v(\xb)$ and the corresponding estimated region of attraction (red line) can be seen in the bottom half of~\cref{fig:example_set}.

\begin{figure*}[t]
  \subfigure[Initial safe set (in red).\label{fig:example_set_1}]{\includegraphics[scale=1]{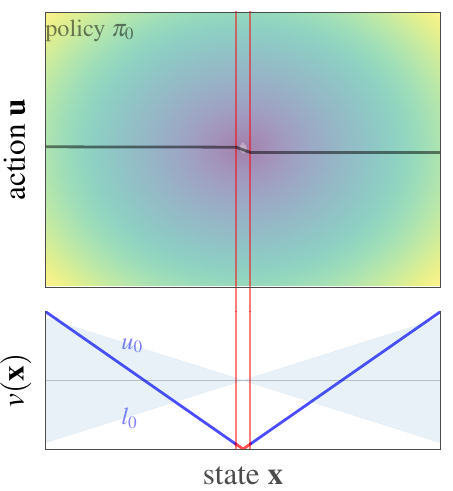}}
  \hfill
  \subfigure[Exploration: 15 data points.\label{fig:example_set_2}]{\includegraphics[scale=1]{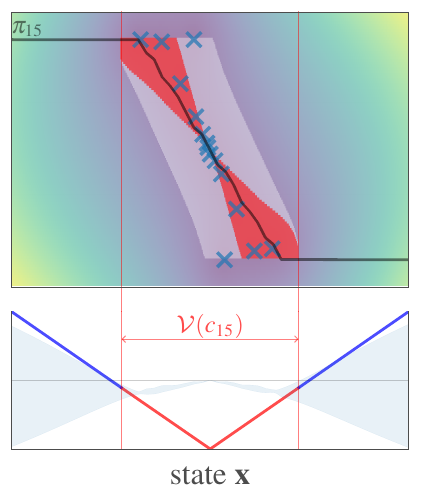}}
  \hfill
  \subfigure[Final policy after 30 evaluations.\label{fig:example_set_3}]{\includegraphics[scale=1]{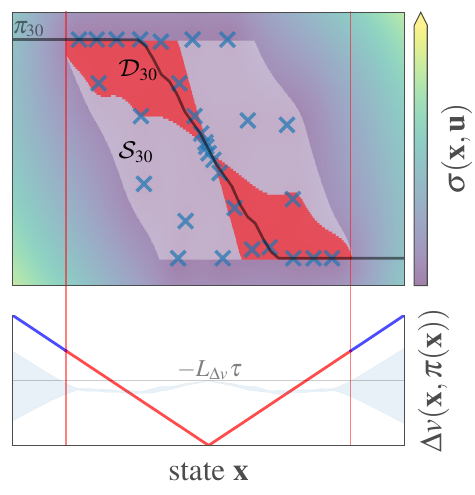}}
  \caption{Example application of~\cref{alg:safe_learning}. Due to input constraints, the system becomes unstable for large states.
  We start from an initial, local policy~$\pi_0$ that has a small, safe region of attraction (red lines) in~\cref{fig:example_set_1}. The algorithm selects safe, informative state-action pairs within~$\mathcal{S}_\ndata$ (top, white shaded), which can be evaluated without leaving the region of attraction~$\mathcal{V}(c_\ndata)$ (red lines) of the current policy~$\pi_\ndata$. As we gather more data (blue crosses), the uncertainty in the model decreases (top, background) and we use~\cref{eq:policy_update} to update the policy so that it lies within~$\DecSet_\ndata$ (top, red shaded) and fulfills the Lyapunov decrease condition. The algorithm converges to the largest safe set in~\cref{fig:example_set_3}. It improves the policy without evaluating unsafe state-action pairs and thereby without system failure.
  }
\label{fig:example_set}
\end{figure*}

\paragraph{Policy optimization}
So far, we have focused on estimating the region of attraction for a fixed policy. Safety is a property of states under a fixed policy. This means that the policy directly determines which states are safe. Specifically, to form a region of attraction all states in the discretizaton~$\mathcal{X}_\tau$ within a level set of the Lyapunov function need to fulfill the decrease condition in~\cref{thm:gp_region_of_attraction} that depends on the policy choice. The set of all state-action pairs that fulfill this decrease condition is given by
\begin{align}
  \DecSet_\ndata = \big\{ (\xb, \ub) \in \mathcal{X}_\tau \times \mathcal{U} \,|\,
  u_\ndata(\xb, \ub) - v(\xb)  < -L_{\Delta v} \tau \big\},
  \label{eq:gp_dec_set_nice}
\end{align}
see~\cref{fig:example_set_3} (top, red shaded). In order to estimate the region of attraction based on this set, we need to commit to a policy. Specifically, we want to pick the policy that leads to the largest possible region of attraction according to~\cref{thm:gp_region_of_attraction}. This requires that for each discrete state in~$\mathcal{X}_\tau$ the corresponding state-action pair under the policy must be in the set~$\DecSet_\ndata$. Thus, we optimize the policy according to
\begin{equation}
  \pi_n, c_n = \argmax_{\pi \in \Pi_L, c \in \mathbb{R}_{>0}} c, \quad \textnormal{such~that~for all~} \xb \in \mathcal{V}(c) \cap \mathcal{X}_\tau \colon  (\xb, \pi(\xb)) \in \mathcal{D}_\ndata .
  \label{eq:policy_update}
\end{equation}
The region of attraction that corresponds to the optimized policy~$\pi_\ndata$ according to~\cref{eq:policy_update} is given by~$\mathcal{V}(c_\ndata)$, see~\cref{fig:example_set_2}. It is the largest level set of the Lyapunov function for which all state-action pairs~$(\xb, \pi_n(\xb))$ that correspond to discrete states within~$\mathcal{V}(c_n) \cap \mathcal{X}_\tau$ are contained in~$\DecSet_\ndata$. This means that these state-action pairs fulfill the requirements of~\cref{thm:gp_region_of_attraction} and~$\mathcal{V}(c_n)$ is a region of attraction of the true system under policy~$\pi_n$. The following theorem is thus a direct consequence of~\cref{thm:gp_region_of_attraction} and~\cref{eq:policy_update}.

\begin{restatable}{theorem}{roanice}
  Let $\mathcal{R}_{\pi_\ndata}$ be the true region of attraction of~\cref{eq:dynamic_system} under the policy~$\pi_\ndata$. For any $\delta \in (0, 1)$, we have with probability at least $(1 - \delta)$ that
$
  {\mathcal{V}(c_\ndata) \subseteq \mathcal{R}_{\pi_\ndata}}
$
for all $\ndata > 0$.
\end{restatable}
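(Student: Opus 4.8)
The plan is to reduce the statement entirely to \cref{thm:gp_region_of_attraction}, exploiting the fact that the optimized pair~$(\pi_\ndata, c_\ndata)$ is, by construction, feasible for the decrease condition that theorem requires. The chain of reasoning is: (i)~the optimization in~\cref{eq:policy_update} forces every discrete state-action pair inside the chosen level set into~$\DecSet_\ndata$; (ii)~membership in~$\DecSet_\ndata$ is exactly the hypothesis~$u_\ndata(\xb,\ub) < v(\xb) - L_{\Delta v}\tau$ of~\cref{thm:gp_region_of_attraction}; (iii)~that theorem then certifies~$\mathcal{V}(c_\ndata)$ as a true region of attraction under~$\pi_\ndata$; and (iv)~the definition of the (maximal) region of attraction~$\mathcal{R}_{\pi_\ndata}$ yields the claimed inclusion. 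So the work is almost entirely bookkeeping, with one genuine subtlety about how the probability is quantified over~$\ndata$.

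Concretely, I would first condition on the single high-probability event~$E$ furnished by~\cref{as:f_confidence_interval}, on which~$\| f(\xb,\ub) - \mu_\ndata(\xb,\ub) \|_1 \leq \beta_\ndata \sigma_\ndata(\xb,\ub)$ holds simultaneously for every~$\ndata \geq 0$ and every~$(\xb,\ub)$; this event has probability at least~$(1-\delta)$. On~$E$, together with the~$L_v$-Lipschitz property of~$v$, the confidence set~$\mathcal{C}_\ndata(\xb,\ub)$ contains~$v(f(\xb,\ub))$, so the upper bound~$u_\ndata$ is a valid over-estimate of~$v(f(\cdot))$ for all~$\ndata$ at once. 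Next, fix any~$\ndata > 0$. The optimization~\cref{eq:policy_update} returns~$(\pi_\ndata, c_\ndata)$ with~$c_\ndata > 0$ (feasibility is inherited from the initial safe policy~$\pi_0$ on~$\mathcal{S}^x_0$) such that, for every~$\xb \in \mathcal{V}(c_\ndata) \cap \mathcal{X}_\tau$, the constraint~$(\xb, \pi_\ndata(\xb)) \in \DecSet_\ndata$ is satisfied; by the definition of~$\DecSet_\ndata$ in~\cref{eq:gp_dec_set_nice} this is precisely~$u_\ndata(\xb, \pi_\ndata(\xb)) < v(\xb) - L_{\Delta v}\tau$, i.e.\ the condition~\cref{eq:safety_constraint}.

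The hypotheses of~\cref{thm:gp_region_of_attraction} are therefore met deterministically on~$E$ for the policy~$\pi_\ndata$ and the level~$c_\ndata$, so I can invoke it to conclude that~$v(f(\xb, \pi_\ndata(\xb))) < v(\xb)$ for all~$\xb \in \mathcal{V}(c_\ndata)$ and that~$\mathcal{V}(c_\ndata)$ is a region of attraction of~\cref{eq:dynamic_system} under~$\pi_\ndata$. Since~$\mathcal{R}_{\pi_\ndata}$ is by definition the set of all initial states whose trajectories remain bounded and converge to the origin under~$\pi_\ndata$, any forward-invariant convergent set is contained in it; in particular~$\mathcal{V}(c_\ndata) \subseteq \mathcal{R}_{\pi_\ndata}$. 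The main point to handle with care — and the only place the ``for all~$\ndata > 0$'' phrasing could cost probability — is that this argument must not spend a fresh~$\delta$ at each~$\ndata$. I avoid a union bound entirely by noting that~\cref{as:f_confidence_interval}, and hence~\cref{thm:gp_region_of_attraction}, already provides its guarantee on the \emph{single} event~$E$ that is uniform in~$\ndata$; thus the inclusion holds for every~$\ndata > 0$ jointly on~$E$, and the overall statement holds with probability at least~$(1-\delta)$.
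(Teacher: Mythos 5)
Your proposal is correct and follows essentially the same route as the paper's own proof: the constraint in~\cref{eq:policy_update} places every discrete state-action pair in~$\mathcal{V}(c_\ndata)\cap\mathcal{X}_\tau$ into~$\DecSet_\ndata$, which is exactly the hypothesis of~\cref{thm:gp_region_of_attraction}, and that theorem yields the inclusion. Your explicit remark that the probability is spent once on the single event of~\cref{as:f_confidence_interval} (uniform in~$\ndata$, so no union bound) is a point the paper leaves implicit, but it is the correct reading.
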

Thus, when we optimize the policy subject to the constraint in~\cref{eq:policy_update} the estimated region of attraction is always an inner approximation of the true region of attraction. However, solving the optimization problem in~\cref{eq:policy_update} is intractable in general. We approximate the policy update step in~\cref{sec:experiments}.


\paragraph{Collecting measurements}
Given these stability guarantees, it is natural to ask how one might obtain data points in order to improve the model of~$g(\cdot)$ and thus efficiently increase the region of attraction. This question is difficult to answer in general, since it depends on the property of the statistical model. In particular, for general statistical models it is often not clear whether the confidence intervals contract sufficiently quickly. In the following, we make additional assumptions about the model and reachability within~$\mathcal{V}(c_n)$ in order to provide exploration guarantees. These assumptions allow us to highlight fundamental requirements for safe data acquisition and that safe exploration is possible.

We assume that the unknown model errors~$g(\cdot)$ have bounded norm in a reproducing kernel Hilbert space (RKHS,~\cite{Scholkopf2002Learning}) corresponding to a differentiable kernel~$k$, $\|g(\cdot)\|_k \leq B_g$. These are a class of well-behaved functions of the form~$g(\ab) = \sum_{i=0}^\infty \alpha_i k(\ab_i, \ab)$ defined through representer points~$\ab_i$ and weights~$\alpha_i$ that decay sufficiently fast with~$i$. This assumption ensures that~$g$ satisfies the Lipschitz property in~\cref{as:lipschitz_continuity}, see~\cite{Berkenkamp2016Lyapunov}. Moreover, with~$
\beta_\ndata = B_g + 4 \sigma \sqrt{\gamma_\ndata + 1 + \ln(1 / \delta)}
$
we can use GP models for the dynamics that fulfill~\cref{as:f_confidence_interval} if the state if fully observable and the measurement noise is~$\sigma$-sub-Gaussian (e.g., bounded in~$[-\sigma, \sigma]$), see~\cite{Chowdhury2017Kernelized}. Here~$\gamma_\ndata$ is the information capacity. It corresponds to the amount of mutual information that can be obtained about~$g$ from~$\ndata q$ measurements, a measure of the size of the function class encoded by the model. The information capacity has a sublinear dependence on~$\ndata$ for common kernels and upper bounds can be computed efficiently~\cite{Srinivas2012Gaussian}. More details about this model are given in~\cref{sec:gaussian_process_theory}.

In order to quantify the exploration properties of our algorithm, we consider a discrete action space~${\mathcal{U}_\tau \subset \mathcal{U}}$. We define exploration as the number of state-action pairs in~$\mathcal{X}_\tau \times \mathcal{U}_\tau$ that we can safely learn about without leaving the true region of attraction. Note that despite this discretization, the policy takes values on the continuous domain. Moreover, instead of using the confidence intervals directly as in~\cref{eq:policy_update}, we consider an algorithm that uses the Lipschitz constants to slowly expand the safe set. We use this in our analysis to quantify the ability to generalize beyond the current safe set. In practice, nearby states are sufficiently correlated under the model to enable generalization using~\cref{eq:gp_dec_set_nice}.

Suppose we are given a set~$\SafeSet_0$ of state-action pairs about which we can learn safely. Specifically, this means that we have a policy such that, for any state-action pair $(\xb, \ub)$ in~$\SafeSet_0$, if we apply action $\ub$ in state $\xb$ and then apply actions according to the policy, the state  converges to the origin. Such a set can be constructed using the initial policy $\pi_0$ from~\cref{sec:problem} as~$\SafeSet_0 = \{(\xb,\pi_0(\xb)) \,|\, \xb \in \SafeSet_0^x \}$. Starting from this set, we want to update the policy to expand the region of attraction according to~\cref{thm:gp_region_of_attraction}. To this end, we use the confidence intervals on~$v(f(\cdot))$ for states inside~$\SafeSet_0$ to determine state-action pairs that fulfill the decrease condition. We thus redefine~$\DecSet_\ndata$ for the exploration analysis to
\begin{align}
  \DecSet_\ndata = \bigcup_{ (\xb, \ub) \in \SafeSet_{\ndata - 1}}  \big\{ \ab' \in \mathcal{X}_\tau \times \mathcal{U}_\tau \,|\,
  u_\ndata(\xb, \ub) - v(\xb) + L_{\Delta v} \| \ab' - (\xb, \ub) \|_1 < -L_{\Delta v} \tau \big\}.
  \label{eq:gp_dec_set}
\end{align}
This formulation is equivalent to~\cref{eq:gp_dec_set_nice}, except that it uses the Lipschitz constant to generalize safety. Given~$\DecSet_\ndata$, we can again find a region of attraction~$\mathcal{V}(c_n)$ by committing to a policy according to~\cref{eq:policy_update}. In order to expand this region of attraction effectively we need to decrease the posterior model uncertainty about the dynamics of the GP by collecting measurements. However, to ensure safety as outlined in~\cref{sec:problem}, we are not only restricted to states within~$\mathcal{V}(c_n)$, but also need to ensure that the state after taking an action is safe; that is, the dynamics map the state back into the region of attraction~$\mathcal{V}(c_n)$. We again use the Lipschitz constant in order to determine this set,
\begin{align}
  \SafeSet_\ndata = \bigcup_{\ab \in \SafeSet_{\ndata - 1}} \big\{ \ab' \in \mathcal{V} ( c_n ) \cap \mathcal{X}_\tau \times \mathcal{U}_\tau \,|\,
  u_\ndata(\ab) + L_v L_f \| \ab - \ab' \|_1 \leq c_n \}.
  \label{eq:gp_safe_set}
\end{align}
The set~$\SafeSet_\ndata$ contains state-action pairs that we can safely evaluate under the current policy~$\pi_n$ without leaving the region of attraction, see~\cref{fig:example_set} (top, white shaded).

What remains is to define a strategy for collecting data points within~$\SafeSet_\ndata$ to effectively decrease model uncertainty. We specifically focus on the high-level requirements for any exploration scheme without committing to a specific method. In practice, any (model-based) exploration strategy that aims to decrease model uncertainty by driving the system to specific states may be used. Safety can be ensured by picking actions according to~$\pi_\ndata$ whenever the exploration strategy reaches the boundary of the safe region~$\mathcal{V}(c_n)$; that is, when~$u_n(\xb, \ub) > c_n$. This way, we can use~$\pi_\ndata$ as a backup policy for exploration.

The high-level goal of the exploration strategy is to shrink the confidence intervals at state-action pairs~$\SafeSet_\ndata$ in order to expand the safe region. Specifically, the exploration strategy should aim to visit state-action pairs in~$\SafeSet_\ndata$ at which we are the most uncertain about the dynamics; that is, where the confidence interval is the largest:
\begin{equation}
  (\xb_\ndata, \ub_\ndata) = \argmax_{(\xb, \ub) \in \SafeSet_\ndata}  u_\ndata(\xb, \ub) - l_\ndata (\xb, \ub) . 
  \label{eq:sampling_criterion}
\end{equation}
As we keep collecting data points according to~\cref{eq:sampling_criterion}, we decrease the uncertainty about the dynamics for different actions throughout the region of attraction and adapt the policy, until eventually we have gathered enough information in order to expand it. While~\cref{eq:sampling_criterion} implicitly assumes that any state within~$\mathcal{V}(c_n)$ can be reached by the exploration policy, it achieves the high-level goal of any exploration algorithm that aims to reduce model uncertainty. In practice, any safe exploration scheme is limited by unreachable parts of the state space.

We compare the active learning scheme in~\cref{eq:sampling_criterion} to an oracle baseline that starts from the same initial safe set~$\SafeSet_0$ and knows~$v(f(\xb,\ub))$ up to $\epsilon$ accuracy within the safe set. The oracle also uses knowledge about the Lipschitz constants and the optimal policy in~$\Pi_L$ at each iteration. We denote the set that this baseline manages to determine as safe with~$\Rbar_\epsilon(\SafeSet_0)$ and provide a detailed definition in~\cref{sec:app_baseline}.
\begin{restatable}{theorem}{maintheorem}
  Assume $\sigma$-sub-Gaussian measurement noise and that the model error~$g(\cdot)$ in~\cref{eq:dynamic_system} has RKHS norm smaller than~$B_g$.
  Under the assumptions of~\cref{thm:gp_region_of_attraction}, with~${\beta_\ndata = B_g + 4 \sigma \sqrt{\gamma_\ndata + 1 + \ln(1 / \delta)} }$, and with measurements collected according to~\cref{eq:sampling_criterion}, let $\ndata^*$ be the smallest positive integer so that
  $
    \frac{\ndata^*}{\beta_{\ndata^*}^2 \gamma_{\ndata^*}} \geq \frac{C q (| \Rbar(\SafeSet_0)| + 1 )}{L_v^2 \epsilon^2}
  $
  where $C = 8 / \log(1 + \sigma^{-2})$. Let $\mathcal{R}_{\pi}$ be the true region of attraction of~\cref{eq:dynamic_system} under a policy~$\pi$. For any $\epsilon > 0$, and $\delta \in (0, 1)$, the following holds jointly with probability at least $(1 - \delta)$ for all $\ndata > 0$:

  \begin{enumerate*}[label= (\roman*)]
    \item  ~${\mathcal{V}(c_\ndata) \subseteq \mathcal{R}_{\pi_\ndata}}$
\label{thm:main_rlev_D_contained_in_roa} \hspace{2.5em}
    \item  ~$f(\xb, \ub) \in \mathcal{R}_{\pi_\ndata}\, \forall (\xb, \ub) \in \SafeSet_\ndata$.
\label{thm:main_x_next_contained_in_roa} \hspace{2.5em}
    \item ~$\Rbar_\epsilon(\SafeSet_0) \subseteq \SafeSet_\ndata \subseteq \Rbar_0(\SafeSet_0)$.
\label{thm:main_achieve_baseline}
\end{enumerate*}
\label{thm:exploration_guarantees}
\end{restatable}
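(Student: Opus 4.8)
The plan is to fix a single high-probability event on which all confidence intervals are valid, and then show that the three claims hold deterministically on that event. By the RKHS/GP concentration result of~\cite{Chowdhury2017Kernelized}, the choice $\beta_\ndata = B_g + 4\sigma\sqrt{\gamma_\ndata + 1 + \ln(1/\delta)}$ makes~\cref{as:f_confidence_interval} hold jointly for all $\ndata \geq 0$ with probability at least $(1-\delta)$; condition on this event throughout. Because $\mathcal{C}_\ndata = \mathcal{C}_{\ndata-1} \cap \mathcal{Q}_\ndata$ is built by intersection, the bounds $u_\ndata$ are nonincreasing and $l_\ndata$ nondecreasing in $\ndata$, so that $v(f(\xb,\ub)) \in [l_\ndata(\xb,\ub), u_\ndata(\xb,\ub)]$ always; together with~\cref{eq:gp_dec_set,eq:gp_safe_set} this gives the monotonicity $\SafeSet_{\ndata-1} \subseteq \SafeSet_\ndata$ and $\DecSet_{\ndata-1} \subseteq \DecSet_\ndata$, which I use repeatedly.

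The two safety claims are direct consequences of the set definitions. For (i), the pair $(\pi_\ndata, c_\ndata)$ returned by~\cref{eq:policy_update} guarantees $(\xb, \pi_\ndata(\xb)) \in \DecSet_\ndata$ for every $\xb \in \mathcal{V}(c_\ndata) \cap \mathcal{X}_\tau$, hence the strict decrease $u_\ndata(\xb, \pi_\ndata(\xb)) - v(\xb) < -L_{\Delta v}\tau$ holds on the discretization, and~\cref{thm:gp_region_of_attraction} certifies that $\mathcal{V}(c_\ndata)$ is a region of attraction of the true closed-loop system, so $\mathcal{V}(c_\ndata) \subseteq \mathcal{R}_{\pi_\ndata}$. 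For (ii), any $(\xb,\ub) \in \SafeSet_\ndata$ has, by~\cref{eq:gp_safe_set}, a witness $\ab \in \SafeSet_{\ndata-1}$ with $u_\ndata(\ab) + L_v L_f \|\ab - (\xb,\ub)\|_1 \leq c_\ndata$; combining $v(f(\ab)) \leq u_\ndata(\ab)$ with the Lipschitz bound on $v \circ f$ yields $v(f(\xb,\ub)) \leq u_\ndata(\ab) + L_v L_f \|\ab - (\xb,\ub)\|_1 \leq c_\ndata$, so $f(\xb,\ub) \in \mathcal{V}(c_\ndata) \subseteq \mathcal{R}_{\pi_\ndata}$ by (i).

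The soundness inclusion $\SafeSet_\ndata \subseteq \Rbar_0(\SafeSet_0)$ in (iii) I prove by induction on $\ndata$, starting from $\SafeSet_0 \subseteq \Rbar_0(\SafeSet_0)$. Since $v(f(\cdot))$ lies inside $[l_\ndata, u_\ndata]$ on the validity event, every pair that the Lipschitz operators~\cref{eq:gp_dec_set,eq:gp_safe_set} certify as decreasing or safe would also be certified by an oracle that knows $v(f(\cdot))$ exactly (the $\epsilon=0$ baseline); matching the two operators term by term, and using the induction hypothesis on the witness set $\SafeSet_{\ndata-1}$, propagates the inclusion.

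The core of the argument, and the main obstacle, is the completeness inclusion $\Rbar_\epsilon(\SafeSet_0) \subseteq \SafeSet_\ndata$ for $\ndata \geq \ndata^*$, which must convert an information-theoretic bound on uncertainty reduction into a combinatorial statement about safe-set expansion. Writing $w_\ndata(\xb,\ub) \eqdef u_\ndata(\xb,\ub) - l_\ndata(\xb,\ub)$, the standard mutual-information bound (Cauchy--Schwarz applied to the trace $\sigma_\ndata = \trace(\Sigma_\ndata^{1/2})$, with the output dimension $\nstate$ and the information capacity $\gamma_\ndata$ entering) controls $\sum_{t=1}^{\ndata} w_t(\xb_t,\ub_t)^2$; since~\cref{eq:sampling_criterion} always samples the widest point of the nested set $\SafeSet_\ndata$, a pigeonhole over expansion epochs forces the maximal width over $\SafeSet_\ndata$ below the oracle's accuracy $\epsilon$ (in the appropriate $L_v$-scaled sense) once $\ndata/(\beta_\ndata^2 \gamma_\ndata)$ exceeds the stated threshold. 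The remaining ingredient is a ``cannot get stuck'' lemma: whenever $\Rbar_\epsilon(\SafeSet_0) \not\subseteq \SafeSet_\ndata$ there must exist a reachable safe pair of width exceeding $\epsilon$, so the safe set grows strictly; being bounded by $\Rbar_0(\SafeSet_0)$ it can grow at most $|\Rbar(\SafeSet_0)|$ times, and each growth epoch is budgeted by $\ndata^*/(|\Rbar(\SafeSet_0)|+1)$ samples, which is precisely the role of the $(|\Rbar(\SafeSet_0)|+1)$ factor in $\ndata^*$. Once all widths in $\SafeSet_\ndata$ are below $\epsilon$, the Lipschitz operators certify at least as much as the $\epsilon$-accurate baseline, closing the inclusion. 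The delicate points I expect to fight with are aligning our Lipschitz expansion operators with the baseline operator that defines $\Rbar_\epsilon$ so that the domination holds at every iterate, and ensuring the width-to-expansion accounting respects that the sampled max-uncertainty pair in~\cref{eq:sampling_criterion} genuinely lies in the current safe set rather than in a not-yet-certified region.
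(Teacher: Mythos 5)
Your plan matches the paper's proof essentially step for step: parts (i) and (ii) follow directly from the set definitions plus \cref{thm:gp_region_of_attraction} and the Lipschitz/confidence-interval chain, the soundness inclusion $\SafeSet_\ndata \subseteq \Rbar_0(\SafeSet_0)$ is proved by the same induction against the $\epsilon=0$ oracle, and the completeness inclusion uses the same information-capacity width bound, a ``cannot get stuck'' lemma, and a pigeonhole over at most $|\Rbar_0(\SafeSet_0)|$ expansion epochs. The delicate points you flag (dominating the baseline operators at every iterate, and the sampled point lying in the current safe set) are exactly the ones the paper's auxiliary lemmas resolve, so this is the same argument in outline.
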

\cref{thm:exploration_guarantees} states that, when selecting data points according to~\cref{eq:sampling_criterion}, the estimated region of attraction~$\mathcal{V}(c_n)$ is~\cref{thm:main_rlev_D_contained_in_roa} contained in the true region of attraction under the current policy and~\cref{thm:main_x_next_contained_in_roa} selected data points do not cause the system to leave the region of attraction. This means that any exploration method that considers the safety constraint~\cref{eq:gp_safe_set} is able to safely learn about the system without leaving the region of attraction. The last part of~\cref{thm:exploration_guarantees},~\cref{thm:main_achieve_baseline}, states that after a finite number of data points~$\ndata^*$ we achieve at least the exploration performance of the oracle baseline, while we do not classify unsafe state-action pairs as safe. This means that the algorithm explores the largest region of attraction possible for a given Lyapunov function with residual uncertaint about~$v(f(\cdot))$ smaller than~$\epsilon$. Details of the comparison baseline are given in the appendix. In practice, this means that any exploration method that manages to reduce the maximal uncertainty about the dynamics within~$\SafeSet_\ndata$ is able to expand the region of attraction.

An example run of repeatedly evaluating~\cref{eq:sampling_criterion} for a one-dimensional state-space is shown in~\cref{fig:example_set}. It can be seen that, by only selecting data points within the current estimate of the region of attraction, the algorithm can efficiently optimize the policy and expand the safe region over time.


\section{Practical Implementation and Experiments}

\begin{algorithm}[t]
  \caption{\textsc{SafeLyapunovLearning}}
  \begin{algorithmic}[1]
    \setcounter{ALC@unique}{0}
    \STATE{} \textbf{Input:} Initial safe policy $\pi_0$, dynamics model~$\mathcal{GP}(\mu(\ab), k(\ab, \ab'))$  \\
    \FORALL{$\ndata = 1, \dots$}
      \STATE{} Compute policy~$\pi_n$ via SGD on \cref{eq:lagrangian} \\
      \STATE{} $c_n = \argmax_c c, \textnormal{such~that~} \forall \xb \in \mathcal{V}(c_n) \cap \mathcal{X}_\tau \colon u_n(\xb, \pi_\ndata(\xb)) -v(\xb) < -L_{\Delta v}\tau$  \\
      \STATE{} $\SafeSet_\ndata = \{ (\xb, \ub) \in \mathcal{V}(c_\ndata) \times \mathcal{U}_\tau \,|\, u_n(\xb, \ub) \leq c_\ndata \} $ \label{alg:safe_learning_Sn}
      \STATE{} Select $(\xb_\ndata, \ub_\ndata)$ within $\SafeSet_\ndata$  using~\cref{eq:sampling_criterion} and drive system there with backup policy~$\pi_n$ \\
      \STATE{} Update GP with measurements $f(\xb_\ndata, \ub_\ndata) + \epsilon_\ndata$\\
    \ENDFOR{}
  \end{algorithmic}
\label{alg:safe_learning}
\end{algorithm}

In the previous section, we have given strong theoretical results on safety and exploration for an idealized algorithm that can solve~\cref{eq:policy_update}. In this section, we provide a practical variant of the theoretical algorithm in the previous section. In particular, while we retain safety guarantees, we sacrifice exploration guarantees to obtain a more practical algorithm. This is summarized in~\cref{alg:safe_learning}.

The policy optimization problem in~\cref{eq:policy_update} is intractable to solve and only considers safety, rather than a performance metric. We propose to use an approximate policy update that that maximizes approximate performance while providing stability guarantees. It proceeds by optimizing the policy first and then computes the region of attraction~$\mathcal{V}(c_n)$ for the new, fixed policy. This does not impact safety, since data is still only collected inside the region of attraction. Moreover, should the optimization fail and the region of attraction decrease, one can always revert to the previous policy, which is guaranteed to be safe.

In our experiments, we use approximate dynamic programming~\citep{Powell2007Approximate} to capture the performance of the policy. Given a policy~$\pi_\theta$ with parameters~$\theta$, we compute an estimate of the cost-to-go~$J_{\pi_\theta}(\cdot)$ for the mean dynamics~$\mu_{\ndata}$ based on the cost $r(\xb, \ub) \geq 0$. At each state,~$J_{\pi_\theta}(\xb)$ is the sum of $\gamma$-discounted rewards encountered when following the policy~$\pi_\theta$. The goal is to adapt the parameters of the policy for minimum cost as measured by~$J_{\pi_\theta}$, while ensuring that the safety constraint on the worst-case decrease on the Lyapunov function in~\cref{thm:gp_region_of_attraction} is not violated. A Lagrangian formulation to this constrained optimization problem is
\begin{equation}
  \pi_n = \argmin_{\pi_\theta \in \Pi_L}
  \sum_{\xb \in \mathcal{X}_\tau}
  r(\xb, \pi_\theta(\xb)) + \gamma J_{\pi_\theta}(\mu_{\ndata - 1}(\xb, \pi_\theta(\xb))
  + \lambda \Big(u_n(\xb, \pi_\theta(\xb)) - v(\xb) + L_{\Delta v} \tau \Big),
  \label{eq:lagrangian}
\end{equation}
where the first term measures long-term cost to go and~$\lambda \geq 0$ is a Lagrange multiplier for the safety constraint from~\cref{thm:gp_region_of_attraction}. In our experiments, we use the value function as a Lyapunov function candidate,~$v=J$ with~$r(\cdot,\cdot)\geq0$, and set~$\lambda=1$. In this case,~\cref{eq:lagrangian} corresponds to an high-probability upper bound on the cost-to-go given the uncertainty in the dynamics. This is similar to worst-case performance formulations found in robust MDPs~\cite{Tamar2014Scaling,Wiesemann2012Robust}, which consider worst-case value functions given parametric uncertainty in MDP transition model. Moreover, since~$L_{\Delta v}$ depends on the Lipschitz constant of the policy, this simultaneously serves as a regularizer on the parameters~$\theta$.

To verify safety, we use the GP confidence intervals~$l_n$ and $u_n$ directly, as in~\cref{eq:gp_dec_set_nice}. We also use confidence to compute~$\SafeSet_\ndata$ for the active learning scheme, see~\cref{alg:safe_learning}, Line~\ref{alg:safe_learning_Sn}. In practice, we do not need to compute the entire set~$\SafeSet_\ndata$ to solve~\cref{eq:policy_update}, but can use a global optimization method or even a random sampling scheme within~$\mathcal{V}(c_n)$ to find suitable state-actions.
Moreover, measurements for actions that are far away from the current policy are unlikely to expand~$\mathcal{V}(c_n)$, see~\cref{fig:example_set_3}.
As we optimize~\cref{eq:lagrangian} via gradient descent, the policy changes only locally. Thus, we can achieve better data-efficiency by restricting the exploratory actions~$\ub$ with $(\xb, \ub) \in \mathcal{S}_\ndata$ to be close to~$\pi_n$,~$\ub \in [\pi_\ndata(\xb) - \bar{u}, \pi_\ndata(\xb) + \bar{u}]$ for some constant~$\bar{u}$.

Computing the region of attraction by verifying the stability condition on a discretized domain suffers from the curse of dimensionality. However, it is not necessary to update policies in real time. In particular, since any policy that is returned by the algorithm is provably safe within some level set, any of these policies can be used safely for an arbitrary number of time steps. To scale this method to higher-dimensional system, one would have to consider an adaptive discretization for the verification as in~\cite{Bobiti2016Sampling}.

\label{sec:optimization}
\begin{figure*}[t]
  \subfigure[Estimated safe set.\label{fig:levelset}]{\includegraphics[scale=1]{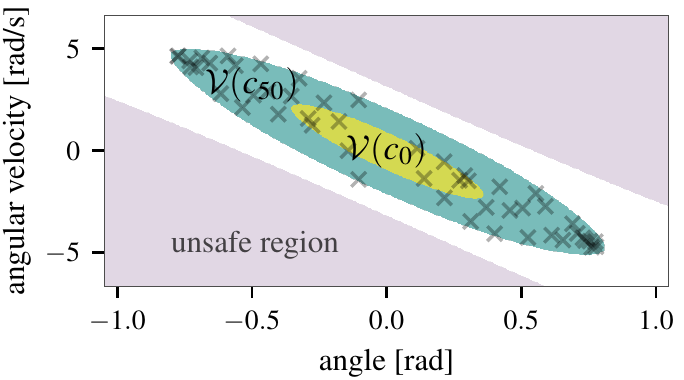}}
  \hfill
  \subfigure[State trajectory (lower is better). \label{fig:performance}]{\includegraphics[scale=1]{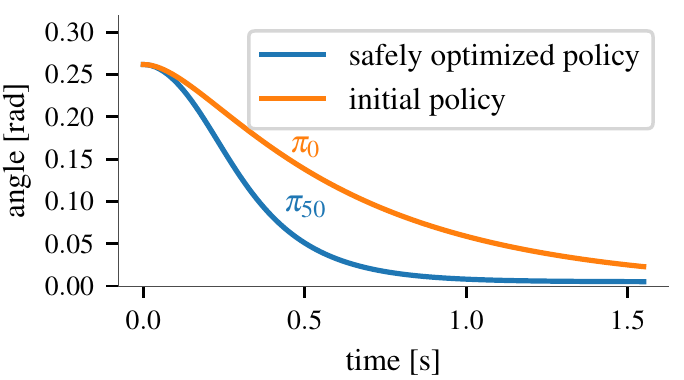}}
  \caption{Optimization results for an inverted pendulum. \cref{fig:levelset} shows the initial safe set (yellow) under the policy~$\pi_0$, while the green region represents the estimated region of attraction under the optimized neural network policy. It is contained within the true region of attraction (white). \cref{fig:performance} shows the improved performance of the safely learned policy over the policy for the prior model.
  }
\label{fig:experiments}
\end{figure*}

\paragraph{Experiments}
\label{sec:experiments}
\label{sec:practical}
A Python implementation of~\cref{alg:safe_learning} and the experiments based on TensorFlow~\cite{Abadi2016TensorFlow} and GPflow~\cite{Matthews2017GPflow} is available at~\url{https://github.com/befelix/safe_learning}.

We verify our approach on an inverted pendulum benchmark problem. The true, continuous-time dynamics are given by
$
m l^2 \ddot{\psi} =  g m l \sin(\psi) - \lambda \dot{\psi} + u
$, where~$\psi$ is the angle, $m$ the mass, $g$ the gravitational constant, and~$u$ the torque applied to the pendulum. The control torque is limited, so that the pendulum necessarily falls down beyond a certain angle. We use a GP model for the \textit{discrete-time} dynamics, where the mean dynamics are given by a linearized and discretized model of the true dynamics that considers a wrong, lower mass and neglects friction. As a result, the optimal policy for the mean dynamics does not perform well and has a small region of attraction as it underactuates the system. We use a combination of linear and Mat\'ern kernels in order to capture the model errors that result from parameter and integration errors.

For the policy, we use a neural network with two hidden layers and 32 neurons with ReLU activations each. We compute a conservative estimate of the Lipschitz constant as in~\citep{Szegedy2014Intriguing}. We use standard approximate dynamic programming with a quadratic, normalized cost~$r(\xb,\ub) = \xb^\T \mathbf{Q} \xb + \ub^\T \mathbf{R} \ub$, where~$\mathbf{Q}$ and~$\mathbf{R}$ are positive-definite, to compute the cost-to-go~$J_{\pi_\theta}$. Specifically, we use a piecewise-linear triangulation of the state-space as to approximate~$J_{\pi_\theta}$, see~\cite{Davies1996Multidimensional}.
This allows us to quickly verify the assumptions that we made about the Lyapunov function in~\cref{sec:problem} using a graph search.
In practice, one may use other function approximators. We optimize the policy via stochastic gradient descent on~\cref{eq:lagrangian}.

The theoretical confidence intervals for the GP model are conservative. To enable more data-efficient learning, we fix~$\beta_n = 2$. This corresponds to a high-probability decrease condition per-state, rather than jointly over the state space. Moreover, we use local Lipschitz constants of the Lyapunov function rather than the global one. While this does not affect guarantees, it greatly speeds up exploration.

For the initial policy, we use approximate dynamic programming to compute the optimal policy for the prior mean dynamics. This policy is unstable for large deviations from the initial state and has poor performance, as shown in~\cref{fig:performance}. Under this initial, suboptimal policy, the system is stable within a small region of the state-space~\cref{fig:levelset}.
Starting from this initial safe set, the algorithm proceeds to collect safe data points and improve the policy. As the uncertainty about the dynamics decreases, the policy improves and the estimated region of attraction increases. The region of attraction after 50 data points is shown in~\cref{fig:levelset}. The resulting set~$\mathcal{V}(c_n)$ is contained within the true safe region of the optimized policy~$\pi_\ndata$. At the same time, the control performance improves drastically relative to the initial policy, as can be seen in~\cref{fig:performance}. Overall, the approach enables safe learning about dynamic systems, as all data points collected during learning are safely collected under the current policy.


\section{Conclusion}
\label{sec:conclusion}

We have shown how classical reinforcement learning can be combined with safety constraints in terms of stability. Specifically, we showed how to safely optimize policies and give stability certificates based on statistical models of the dynamics. Moreover, we provided theoretical safety and exploration guarantees for an algorithm that can drive the system to desired state-action pairs during learning. We believe that our results present an important first step towards safe reinforcement learning algorithms that are applicable to real-world problems.

\subsubsection*{Acknowledgments}

This research was supported by SNSF grant {200020\_159557}, the Max Planck ETH Center for Learning Systems, NSERC grant {RGPIN-2014-04634}, and the Ontario Early Researcher Award.
%

\bibliographystyle{unsrt}
\bibliography{root.bib}

\appendix

\section{Proofs}
\label{sec:proofs}

\subsection{Stability verification}
\label{sec:proofs_stability}

\begin{restatable}{lemma}{deltaverrorbound}
  Using~\cref{as:lipschitz_continuity,as:f_confidence_interval}, let~$\mathcal{X}_\tau$ be a discretization of $\mathcal{X}$ such that ${\| \xb - {[\xb]}_\tau \|_1 \leq \tau}$ for all ${\xb \in \mathcal{X}}$. Then, for all $\xb \in \mathcal{X}$, we have with probability at least $1-\delta$ that
  \begin{equation}
    \big| v(\mu_{n-1}({[\ab]}_\tau)) - v({[\xb]}_\tau) - \big( v(f(\ab)) - v(\xb) \big) \big|
    \leq  L_v \beta_\ndata \sigma_{n-1}({[\ab]}_\tau) + (L_v L_f (L_\pi + 1) + L_v ) \tau,
  \end{equation}
  where $\ab = (\xb, \pi(\xb))$ and ${[\ab]}_\tau = ({[\xb]}_\tau, \pi({[\xb]}_\tau))$.
\label{lem:v_decrease_confidence}
\end{restatable}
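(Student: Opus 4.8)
The plan is to bound the quantity of interest by splitting it into two conceptually distinct error sources: the statistical model error (captured by the GP confidence interval) and the discretization error (captured by the Lipschitz constants and the grid resolution $\tau$). Writing $\ab = (\xb,\pi(\xb))$ and ${[\ab]}_\tau = ({[\xb]}_\tau, \pi({[\xb]}_\tau))$, I would insert intermediate terms and apply the triangle inequality to get
\begin{equation}
  \big| v(\mu_{n-1}({[\ab]}_\tau)) - v({[\xb]}_\tau) - \big( v(f(\ab)) - v(\xb) \big) \big|
  \leq \underbrace{\big| v(\mu_{n-1}({[\ab]}_\tau)) - v(f({[\ab]}_\tau)) \big|}_{A}
  + \underbrace{\big| v(f({[\ab]}_\tau)) - v(f(\ab)) \big|}_{B}
  + \underbrace{\big| v({[\xb]}_\tau) - v(\xb) \big|}_{C}.
\end{equation}
Term $A$ is the pure model-error term, terms $B$ and $C$ are the discretization terms. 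I would handle each separately.

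For term $A$, I would use the $L_v$-Lipschitz continuity of $v$ to pass to $L_v \| f({[\ab]}_\tau) - \mu_{n-1}({[\ab]}_\tau) \|_1$, and then invoke~\cref{as:f_confidence_interval} directly at the grid point ${[\ab]}_\tau$, which bounds this by $L_v \beta_\ndata \sigma_{n-1}({[\ab]}_\tau)$ with probability at least $(1-\delta)$. This is the only place the high-probability statement enters, so the $(1-\delta)$ in the conclusion comes from here. For term $C$, Lipschitz continuity of $v$ together with the discretization guarantee $\| \xb - {[\xb]}_\tau \|_1 \leq \tau$ immediately gives $C \leq L_v \tau$. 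Term $B$ is the one requiring the chain of Lipschitz constants: I would first use $L_v$-Lipschitz continuity of $v$ to reduce to $L_v \| f({[\ab]}_\tau) - f(\ab) \|_1$, then use that $f = h + g$ is $L_f$-Lipschitz (with $L_f = L_h + L_g$ implied by~\cref{as:lipschitz_continuity}) to get $L_v L_f \| {[\ab]}_\tau - \ab \|_1$.

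The remaining work, and the main technical obstacle, is bounding $\| {[\ab]}_\tau - \ab \|_1 = \| ({[\xb]}_\tau, \pi({[\xb]}_\tau)) - (\xb, \pi(\xb)) \|_1$. This splits additively in the 1-norm into a state part $\| {[\xb]}_\tau - \xb \|_1 \leq \tau$ and an action part $\| \pi({[\xb]}_\tau) - \pi(\xb) \|_1$; for the latter I would use that $\pi \in \Pi_L$ is $L_\pi$-Lipschitz, giving $\| \pi({[\xb]}_\tau) - \pi(\xb) \|_1 \leq L_\pi \| {[\xb]}_\tau - \xb \|_1 \leq L_\pi \tau$. Hence $\| {[\ab]}_\tau - \ab \|_1 \leq (L_\pi + 1)\tau$ and therefore $B \leq L_v L_f (L_\pi + 1) \tau$. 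Summing $A$, $B$, and $C$ yields exactly $L_v \beta_\ndata \sigma_{n-1}({[\ab]}_\tau) + (L_v L_f (L_\pi + 1) + L_v)\tau$, matching the claim. The only subtlety to state carefully is that the 1-norm must be used consistently throughout so that the action and state deviations add rather than combine through a constant, which is precisely why~\cref{as:lipschitz_continuity} specifies Lipschitz continuity with respect to the 1-norm.
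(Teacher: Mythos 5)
Your proposal is correct and follows essentially the same route as the paper's proof: the identical three-term triangle-inequality decomposition (model error at the grid point, discretization error through $f$, discretization error in $v$), the same use of \cref{as:f_confidence_interval} at ${[\ab]}_\tau$ for the high-probability term, and the same additive 1-norm split $\|{[\ab]}_\tau - \ab\|_1 \leq (L_\pi + 1)\tau$ via the Lipschitz continuity of the policy. No gaps.
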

\begin{proof}
Let $\ab = (\xb, \pi(\xb))$, ${[\ab]}_\tau = ({[\xb]}_\tau, \pi({[\xb]}_\tau))$, and $\mu = \mu_{n-1}$, $\sigma = \sigma_{n-1}$. Then we have that

\begin{align*}
  & \big| v(\mu({[\ab]}_\tau)) - v({[\xb]}_\tau) - \big( v(f(\ab)) - v(\xb) \big) \big|, \\
  ={}& \big| v(\mu({[\ab]}_\tau)) - v({[\xb]}_\tau) - v(f(\ab)) + v(\xb) \big|, \\
  ={}&  \big|v(\mu({[\ab]}_\tau)) - v(f({[\ab]}_\tau))
  + v(f({[\ab]}_\tau)) - v(f(\ab))
  + v(\xb)- v({[\xb]}_\tau) \big|, \\
  \leq{}&  \big|v(\mu({[\ab]}_\tau)) - v(f({[\ab]}_\tau))\big|
  + \big| v(f({[\ab]}_\tau)) - v(f(\ab))\big|
  + \big| v(\xb)- v({[\xb]}_\tau) \big|, \\
  \leq{}&  L_v \| \mu({[\ab]}_\tau) - f({[\ab]}_\tau) \|_1
  + L_v \|f({[\ab]}_\tau) - f(\ab) \|_1
  + L_v \|\xb - {[\xb]}_\tau \|_1, \\
  \leq{}&  L_v \beta_\ndata \sigma({[\ab]}_\tau) + L_v L_f \|{[\ab]}_\tau - \ab \|_1 + L_v \|\xb - {[\xb]}_\tau \|_1 ,
\end{align*}
where the last three inequalities follow from~\cref{as:lipschitz_continuity,as:f_confidence_interval} to last inequality follows from~\cref{lem:f_confidence_interval}. The result holds with probability at least $1-\delta$. By definition of the discretization and the policy class~$\Pi_L$ we have on each grid cell that
\begin{align*}
  \|\ab - {[\ab]}_\tau \|_1
  &= \| \xb - {[\xb]}_\tau \|_1 + \|\pi(\xb) - \pi({[\xb]}_\tau) \|_1, \\
  &\leq \tau + L_\pi \| \xb - {[\xb]}_\tau \|_1, \\
  & \leq (L_\pi + 1) \tau ,
\end{align*}
where the equality in the first step follows from the definition of the 1-norm. Plugging this into the previous bound yields
\begin{align*}
  & \big| v(\mu({[\ab]}_\tau)) - v({[\xb]}_\tau) - \big( v(f(\ab)) - v(\xb) \big) \big|
  \leq L_v \beta_\ndata \sigma({[\ab]}_\tau) + \left( L_v L_f (1 + L_\pi) + L_v \right) \tau , \\
\end{align*}
which completes the proof.
\end{proof}

\begin{lemma}
  $v(f(\xb, \ub)) \in \mathcal{Q}_\ndata$ holds for all $\xb \in \mathcal{X}$, $\ub \in \mathcal{U}$, and $\ndata > 0$ with probability at least $(1 - \delta)$.
\label{lem:Q_bounds_accurate}
\end{lemma}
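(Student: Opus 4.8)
The plan is to reduce the set-membership claim to a single scalar inequality. By the definition
\[
  \mathcal{Q}_\ndata(\xb,\ub) = \big[\, v(\mu_{\ndata-1}(\xb,\ub)) \pm L_v \beta_\ndata \sigma_{\ndata-1}(\xb,\ub) \,\big],
\]
the assertion $v(f(\xb,\ub)) \in \mathcal{Q}_\ndata(\xb,\ub)$ is exactly the statement that $v(f(\xb,\ub))$ lies within distance $L_v \beta_\ndata \sigma_{\ndata-1}(\xb,\ub)$ of the interval's center, i.e.
\[
  \big| v(f(\xb,\ub)) - v(\mu_{\ndata-1}(\xb,\ub)) \big| \leq L_v \beta_\ndata \sigma_{\ndata-1}(\xb,\ub).
\]
So it suffices to establish this inequality for all $\xb \in \mathcal{X}$, $\ub \in \mathcal{U}$, and $\ndata > 0$.

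First I would invoke the Lipschitz continuity of the Lyapunov function. As noted in the text, $v$ is continuously differentiable on the compact set $\mathcal{X}$ and hence $L_v$-Lipschitz with respect to the 1-norm; applying this to the two arguments $f(\xb,\ub)$ and $\mu_{\ndata-1}(\xb,\ub)$ gives
\[
  \big| v(f(\xb,\ub)) - v(\mu_{\ndata-1}(\xb,\ub)) \big| \leq L_v \, \big\| f(\xb,\ub) - \mu_{\ndata-1}(\xb,\ub) \big\|_1 .
\]
Next I would bound the right-hand norm using \cref{as:f_confidence_interval} evaluated at index $\ndata-1$, which is well defined for every $\ndata > 0$. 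A key point is that \cref{as:f_confidence_interval} posits a \emph{single} high-probability event, valid simultaneously for all indices and all $(\xb,\ub)$; on that event $\| f(\xb,\ub) - \mu_{\ndata-1}(\xb,\ub) \|_1 \leq \beta_{\ndata-1}\sigma_{\ndata-1}(\xb,\ub)$. Chaining the two displays then yields the desired bound, with the scaling $\beta_{\ndata-1}$ in place of $\beta_\ndata$; I would close this small gap by observing that $\beta_\ndata$ is nondecreasing in $\ndata$ (it grows with the information capacity $\gamma_\ndata$), so $\beta_{\ndata-1} \leq \beta_\ndata$ and the inequality only loosens.

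There is no substantial obstacle here: the statement is a direct composition of the $L_v$-Lipschitz property of $v$ with the well-calibrated-model assumption, and it merely formalizes the remark already made in the main text when $\mathcal{Q}_\ndata$ was introduced. The only two points requiring explicit care are the index mismatch between the $\beta_\ndata$ in the definition of $\mathcal{Q}_\ndata$ and the $\beta_{\ndata-1}$ produced by the assumption, resolved via monotonicity of $\beta_\ndata$; and the fact that the $(1-\delta)$ confidence is inherited verbatim from \cref{as:f_confidence_interval} and holds \emph{jointly} over all $\ndata > 0$, so that no union bound over the index $\ndata$ is incurred.
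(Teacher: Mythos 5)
Your proof is correct and follows essentially the same route as the paper, which simply notes that the claim is analogous to \cref{lem:v_decrease_confidence} and follows from the $L_v$-Lipschitz continuity of $v$ combined with \cref{as:f_confidence_interval}; your explicit handling of the $\beta_{\ndata-1}$ versus $\beta_\ndata$ indexing via monotonicity is a reasonable patch of a detail the paper glosses over (its GP instantiation in \cref{lem:f_confidence_interval} directly pairs $\beta_\ndata$ with $\sigma_{\ndata-1}$).
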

\begin{proof}
  The proof is analogous to~\cref{lem:v_decrease_confidence} and follows from~\cref{as:lipschitz_continuity,as:f_confidence_interval}.
\end{proof}

\begin{corollary}
  $v(f(\xb, \ub)) \in \mathcal{C}_\ndata$ holds for all $\xb \in \mathcal{X}$, $\ub \in \mathcal{U}$, and $\ndata > 0$ with probability at least $(1 - \delta)$.
\label{cor:vf_contained_in_C}
\end{corollary}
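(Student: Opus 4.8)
The plan is to exploit the fact that, by unrolling its recursive definition ${\mathcal{C}_\ndata(\xb,\ub) = \mathcal{C}_{\ndata-1}(\xb,\ub) \cap \mathcal{Q}_\ndata(\xb,\ub)}$, the set $\mathcal{C}_\ndata$ is simply the intersection of the initialization $\mathcal{C}_0$ with all the confidence intervals encountered so far, $\mathcal{C}_\ndata(\xb,\ub) = \mathcal{C}_0(\xb,\ub) \cap \bigcap_{i=1}^{\ndata} \mathcal{Q}_i(\xb,\ub)$. Establishing membership of $v(f(\xb,\ub))$ in this intersection reduces to showing it lies in each factor. Since $\mathcal{C}_\ndata \subseteq \mathcal{Q}_\ndata$, the corollary is a strengthening of \cref{lem:Q_bounds_accurate}, and the only genuinely new ingredient relative to that lemma is the initialization $\mathcal{C}_0$.

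First I would dispose of the $\mathcal{C}_0$ containment, which holds deterministically and requires no appeal to the statistical model. If $(\xb,\ub) \notin \SafeSet_0$ then $\mathcal{C}_0(\xb,\ub) = \mathbb{R}$ and there is nothing to check. If $(\xb,\ub) \in \SafeSet_0$ then $\mathcal{C}_0(\xb,\ub) = (-\infty, v(\xb) - L_{\Delta v}\tau)$, so membership requires the strict decrease $v(f(\xb,\ub)) < v(\xb) - L_{\Delta v}\tau$. This is exactly the Lyapunov decrease with discretization margin that the initial safe set is assumed to satisfy by construction; it is the defining property that makes $\SafeSet_0$ a set in which one can learn safely. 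Hence $v(f(\xb,\ub)) \in \mathcal{C}_0(\xb,\ub)$ for every $(\xb,\ub)$.

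Next I would invoke \cref{lem:Q_bounds_accurate}. The key observation is that its guarantee is \emph{joint} in $\ndata$: the single high-probability event of \cref{as:f_confidence_interval}, whose scaling $\beta_\ndata$ is already calibrated to hold simultaneously for all ${\ndata \geq 0}$, implies that on one event $E$ of probability at least $(1-\delta)$ we have $v(f(\xb,\ub)) \in \mathcal{Q}_i(\xb,\ub)$ for all $\xb$, $\ub$, and all $i$ at once. Combining this with the previous step, on $E$ the value $v(f(\xb,\ub))$ lies in $\mathcal{C}_0(\xb,\ub)$ and in every $\mathcal{Q}_i(\xb,\ub)$, hence in their intersection $\mathcal{C}_\ndata(\xb,\ub)$, for all $\ndata$ simultaneously.

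The main point to be careful about is precisely this joint-in-$\ndata$ character of the probabilistic claim. Because $\mathcal{C}_\ndata$ intersects confidence intervals across \emph{all} iterations, a naive per-step argument would incur a union bound over $\ndata$ and degrade the confidence level as the number of measurements grows. This is avoided because \cref{as:f_confidence_interval} already bundles every iteration into a single event, so no further union bound is needed and the final probability remains $(1-\delta)$. The remaining manipulations are routine.
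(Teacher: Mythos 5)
Your proof is correct and takes essentially the same route as the paper, whose one-line argument likewise reduces the claim to the fact that \cref{lem:Q_bounds_accurate} holds \emph{jointly} for all $\ndata$ on the single high-probability event of \cref{as:f_confidence_interval}, so that $v(f(\xb,\ub))$ lies in every $\mathcal{Q}_i(\xb,\ub)$ and hence in their running intersection without any union bound over iterations. You are in fact more explicit than the paper in handling the initialization $\mathcal{C}_0$; the only caveat is that the strict decrease $v(f(\xb,\ub)) < v(\xb) - L_{\Delta v}\tau$ on $\SafeSet_0$ is an \emph{assumption} imposed on the initial safe set (the paper's remark that the initialization ``ensures that the decrease condition is always fulfilled for the initial policy'' plays exactly this role) rather than a consequence of mere convergence to the origin, so it is worth stating it as such rather than as something that follows ``by construction.''
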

\begin{proof}
  Direct consequence of the fact that~\cref{lem:Q_bounds_accurate} holds jointly for all $\ndata > 0$ with probability at least $1-\delta$.
\end{proof}

\cref{lem:v_decrease_confidence} show that the decrease on the Lyapunov function on the discrete grid~$\mathcal{X}_\tau$ is close to that on the continuous domain~$\mathcal{X}$. Given these confidence intervals, we can now establish the region of attraction using~\cref{thm:region_of_attraction}:

\gproa*
\begin{proof}
  Using~\cref{lem:v_decrease_confidence} it holds that $v(f(\xb, \pi(\xb)) - v(\xb) < 0$ for all continuous states $\xb \in \mathcal{V}(c)$ with probability at least $1-\delta$, since all discrete states $\xb_\tau \in \mathcal{V}(c) \cap \mathcal{X}$ fulfill~\cref{eq:safety_constraint}. Thus we can use~\cref{thm:lyapunov_stability} to conclude that $\mathcal{V}(c)$ is a region of attraction for~\cref{eq:dynamic_system}.
\end{proof}

\roanice*
\begin{proof}
  Following the definition of~$\DecSet_\ndata$ in~\cref{eq:gp_dec_set_nice}, it is clear from the constraint in the optimization problem~\cref{eq:policy_update} that for all~$\xb \in \DecSet_\ndata$ it holds that~$(\xb, \pi_\ndata(\xb)) \in \DecSet_\ndata$ or, equivalently that $u_\ndata(\xb, \pi(\xb)) - v(\xb) < -L_{\Delta v} \tau$, see~\cref{eq:gp_dec_set_nice}. The result~$\mathcal{V}(c_n) \subseteq \mathcal{R}_{\pi_\ndata}$ then follows from~\cref{thm:gp_region_of_attraction}.
\end{proof}

Note that the initialization of the confidence intervals~$\mathcal{Q}_0$ ensures that the decrease condition is always fulfilled for the initial policy.


\subsection{Gaussian process model}
\label{sec:gaussian_process_theory}

One particular assumption that satisfies both the Lipschitz continuity and allows us to use GPs as a model of the dynamics is that the model errors~$g(\xb, \ub)$ live in some reproducing kernel Hilbert space (RKHS,~\citep{Christmann2008Support}) corresponding to a differentiable kernel~$k$ and have RKHS norm smaller than $B_g$~\citep{Srinivas2012Gaussian}. In our theoretical analysis, we use this assumption to prove exploration guarantees.

A $\mathcal{GP}(\mu(\ab), k(\ab, \ab'))$ is a distribution over well-behaved, smooth functions~${f\colon \mathcal{X} \times \mathcal{U} \to \mathbb{R}}$ (see~\cref{rem:multi_dim_gp} for the vector-case, $\mathbb{R}^q$) that is parameterized by a mean function~$\mu$ and a covariance function (kernel)~$k$, which encodes assumptions about the functions~\citep{Rasmussen2006Gaussian}. In our case, the mean is given by the prior model~$h$, while the kernel corresponds to the one in the RKHS\@. Given noisy measurements of the dynamics, $\hat{f}(\ab) = f(\ab) + \epsilon$ with~$\ab = (\xb, \ub)$ at locations $A_\ndata = \{ \ab_1, \dots, \ab_\ndata \}$, corrupted by independent, Gaussian noise $\epsilon \sim \mathcal{N}(0, \sigma^2)$ (we relax the Gaussian noise assumption in our analysis), the posterior is a GP distribution again with mean,
$
  \mu_\ndata(\ab) = \mb{k}_\ndata(\ab)^\T (\mb{K}_\ndata + \sigma^2 \mb{I})^{-1} \mb{y}_\ndata$,
covariance
  $k_\ndata(\ab, \ab') = k(\ab, \ab') -\mb{k}_\ndata(\ab)^\T (\mb{K}_\ndata + \sigma^2 \mb{I})^{-1} \mb{k}_\ndata(\ab')$,
  and variance $\sigma^2_\ndata(\ab) = k_\ndata(\ab, \ab)$.
The vector ${
	\mb{y}_\ndata = [
	\begin{matrix}
		\hat{f}(\ab_1) - h(\ab_1),\dots,\hat{f}(\ab_\ndata) - h(\ab_\ndata)
	\end{matrix}
	]^\T
}$
contains observed, noisy deviations from the mean,
${\mb{k}_\ndata(\ab) =
[ \begin{matrix}
	k(\ab,\ab_1),\dots,k(\ab,\ab_\ndata)
\end{matrix} ]}$
contains the covariances between the test input~$\ab$ and the data points in~$\mathcal{D}_\ndata$, ${\mb{K}_\ndata \in \mathbb{R}^{n \times n}}$ has entries ${ {[\mb{K}_\ndata]}_{(i,j)} = k(\ab_i, \ab_j) }$, and $\mb{I}$ is the identity matrix.

\begin{remark}
\label{rem:multi_dim_gp}
In the case of multiple output dimensions (${\nstate > 1}$), we consider a function with one-dimensional output $f'(\xb, \ub, i) \colon \mathcal{X} \times \mathcal{U} \times \mathcal{I} \to \mathbb{R}$, with the output dimension indexed by $i \in \mathcal{I} = \{ 1, \dots, \nstate \} $. This allows us to use the standard definitions of the RKHS norm and GP model. In this case, we define the GP posterior distribution as
$\mu_\ndata(\ab) = [ \mu_\ndata(\ab, 1), \dots, \mu_\ndata(\ab, \nstate) ]^\T$ and
$\sigma_\ndata(\ab) = \sum_{1 \leq i \leq \nstate}\sigma_\ndata(\ab, i)$,
where the unusual definition of the standard deviation is used in~\cref{lem:f_confidence_interval}.
\end{remark}

Given the previous assumptions, it follows from \citep[Lemma 2]{Berkenkamp2016Lyapunov} that the dynamics in~\cref{eq:dynamic_system} are Lipschitz continuous with Lipschitz constant ${L_f = L_h + L_g}$, where $L_g$ depends on the properties (smoothness) of the kernel.

Moreover, we can construct high-probability confidence intervals on the dynamics in~\cref{eq:dynamic_system} that fulfill~\cref{as:f_confidence_interval} using the GP model.
\begin{restatable}{lemma}{dynamicsconfidence}
\label{lem:f_confidence_interval} (\citep[Theorem 6]{Srinivas2012Gaussian})
Assume $\sigma$-sub-Gaussian noise and that the model error~$g(\cdot)$ in~\cref{eq:dynamic_system} has RKHS norm bounded by~$B_g$.
Choose ${\beta_\ndata = B_g + 4 \sigma \sqrt{\gamma_\ndata + 1 + \ln(1 / \delta)} }$. Then,  with probability at least~${1 - \delta}$, ${\delta \in (0, 1)}$, for all~${\ndata \geq 1}$, ${\xb \in \mathcal{X}}$, and~${\ub \in \mathcal{U}}$ it holds that
${
  \| f(\xb, \ub) - \mu_{\ndata-1}(\xb, \ub) \|_1 \leq \beta_\ndata \sigma_{\ndata-1}(\xb, \ub).
}$
\label{eq:f_confidence_interval}
\end{restatable}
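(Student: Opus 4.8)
The plan is to reduce the vector-valued confidence bound to the known scalar kernelized bandit result and then recombine the per-dimension estimates through the $1$-norm. First I would invoke \cref{rem:multi_dim_gp} to treat the dynamics as a single scalar function $f'(\xb, \ub, i)$ on the augmented input domain $\mathcal{X} \times \mathcal{U} \times \{1, \dots, \nstate\}$, whose associated model error has RKHS norm bounded by $B_g$ by assumption. Since a measurement at a single time step supplies one observation per output coordinate, conditioning on $\ndata - 1$ vector measurements corresponds to $(\ndata-1)\nstate$ scalar observations of $f'$, and the posterior mean and variance of $f'$ at $((\xb,\ub), i)$ are exactly the components $\mu_{\ndata-1}(\xb,\ub,i)$ and $\sigma_{\ndata-1}(\xb,\ub,i)$ from the remark. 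This is precisely the setting of \citep[Theorem 6]{Srinivas2012Gaussian}, in the sub-Gaussian-noise form of \cite{Chowdhury2017Kernelized}, which I would apply once to the scalar problem.

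Applying that scalar result gives, with probability at least $1-\delta$ and simultaneously for every $\ndata \ge 1$ and every augmented input $((\xb,\ub), i)$, the bound $|f'(\xb,\ub,i) - \mu_{\ndata-1}(\xb,\ub,i)| \le \beta_\ndata \sigma_{\ndata-1}(\xb,\ub,i)$ with the stated $\beta_\ndata = B_g + 4\sigma\sqrt{\gamma_\ndata + 1 + \ln(1/\delta)}$, where $\gamma_\ndata$ is the information gain of the scalar model over its $\ndata\nstate$ observations. Crucially, this single bound already holds uniformly across the output index $i$, so no union bound over coordinates is needed and the same $\beta_\ndata$ governs every dimension. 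I would then fix $(\xb,\ub)$ and sum over $i$: the left-hand side becomes $\sum_{i=1}^{\nstate} |f(\xb,\ub)_i - \mu_{\ndata-1}(\xb,\ub,i)| = \|f(\xb,\ub) - \mu_{\ndata-1}(\xb,\ub)\|_1$, while the right-hand side collapses to $\beta_\ndata \sum_{i=1}^{\nstate} \sigma_{\ndata-1}(\xb,\ub,i) = \beta_\ndata \sigma_{\ndata-1}(\xb,\ub)$ by the nonstandard definition $\sigma_\ndata = \sum_i \sigma_\ndata(\cdot, i)$ introduced in \cref{rem:multi_dim_gp}. This yields exactly the claimed inequality.

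The main obstacle is bookkeeping rather than a deep argument: one must confirm that the nonstandard summed definition of the standard deviation is what makes the recombination work with a single shared constant $\beta_\ndata$, since a coordinatewise Euclidean bound would not combine cleanly into the $1$-norm, and that the information-gain term $\gamma_\ndata$ is correctly interpreted as that of the lifted scalar model evaluated after $\ndata \nstate$ observations, which is also where the $\nstate$ factor later surfaces in the exploration sample-complexity bound. A secondary point to verify is that the cited scalar bound is stated uniformly over all rounds $\ndata$ and over the entire augmented input space, so that the resulting guarantee indeed holds jointly for all $\ndata \ge 1$, $\xb \in \mathcal{X}$, and $\ub \in \mathcal{U}$ as required by \cref{as:f_confidence_interval}.
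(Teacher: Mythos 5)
Your proposal is correct and follows essentially the same route as the paper: lift the vector-valued dynamics to a scalar function on the augmented domain $\mathcal{X}\times\mathcal{U}\times\{1,\dots,q\}$ via \cref{rem:multi_dim_gp}, apply the scalar kernelized confidence bound of \cite{Chowdhury2017Kernelized} once (uniformly over rounds and augmented inputs, so no union bound over coordinates), and recombine by summing over $i$, which matches the $1$-norm on the left with the summed definition $\sigma_\ndata=\sum_i\sigma_\ndata(\cdot,i)$ on the right. Your remarks on the role of the nonstandard $\sigma_\ndata$ and on $\gamma_\ndata$ counting $\ndata q$ scalar observations are exactly the bookkeeping points the paper's proof also flags.
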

%
\begin{proof}
  From~\citep[Theorem 2]{Chowdhury2017Kernelized} it follows that $|f(\xb, \ub, i) - \mu_{n-1}(\xb, \ub, i)| \leq \beta_{\ndata} \sigma_\ndata(\xb, \ub, i)$ holds with probability at least $1-\delta$ for all~$1 \leq i \leq \nstate$. Following~\cref{rem:multi_dim_gp}, we can model the multi-output function as a single-output function over an extended parameter space. Thus the result directly transfers by definition of the one norm and our definition of~$\sigma_\ndata$ for multiple output dimensions in~\cref{rem:multi_dim_gp}. Note that by iteration~$\ndata$ we have obtained $\ndata \nstate$ measurements in the information capacity~$\gamma_\ndata$.
\end{proof}
That is, the true dynamics are contained within the GP posterior confidence intervals with high probability. The bound depends on the information capacity,
\begin{equation}
  \gamma_\ndata = \max_{A \subset \mathcal{X} \times \mathcal{U} \times \mathcal{I} \colon |A| = \ndata q}  \mathrm{I}(\mb{y}_A; \mb{f}_A),
  \label{eq:information_capacity}
\end{equation}
which is the maximum mutual information that could be gained about the dynamics~$f$ from samples. The information capacity has a sublinear dependence on $\ndata (\neq \ti)$ for many commonly used kernels such as the linear, squared exponential, and Mat\'ern kernels and it can be efficiently and accurately approximated~\citep{Srinivas2012Gaussian}. Note that we explicitly account for the $q$ measurements that we get for each of the $\nstate$ states in~\cref{eq:information_capacity}.

\begin{remark}
  The GP model assumes Gaussian noise, while~\cref{lem:f_confidence_interval} considers~$\sigma$-sub-Gaussian noise. Moreover, we consider functions with bounded RKHS norm, rather than samples from a GP. \cref{lem:f_confidence_interval} thus states that even though we make different assumptions than the model, the confidence intervals are conservative enough to capture the true function with high probability.
\end{remark}

\subsection{Safe exploration}
\label{sec:proofs_exploration}

\begin{remark}
  In the following we assume that~$\DecSet_\ndata$ and~$\SafeSet_\ndata$ are defined as in~\cref{eq:gp_dec_set} and~\cref{eq:gp_safe_set}.
\end{remark}

\paragraph{Baseline}
\label{sec:app_baseline}
As a baseline, we consider a class of algorithms that know about the Lipschitz continuity properties of $v$, $f$, and $\pi$. In addition, we can learn about $v(f(\xb, \ub))$ up to some arbitrary statistical accuracy~$\epsilon$ by visiting state $\xb$ and obtaining a measurement for the next state after applying action $\ub$, but face the safety restrictions defined in~\cref{sec:problem}.
Suppose we are given a set~$\SafeSet$ of state-action pairs about which we can learn safely. Specifically, this means that we have a policy such that, for any state-action pair $(\xb, \ub)$ in~$\SafeSet$, if we apply action $\ub$ in state $\xb$ and then apply actions according to the policy, the state  converges to the origin. Such a set can be constructed using the initial policy $\pi_0$ from~\cref{sec:problem} as~$\SafeSet_0 = \{(\xb,\pi_0(\xb)) \,|\, \xb \in \SafeSet_0^x \}$.

The goal of the algorithm is to expand this set of states that we can learn about safely. Thus, we need to estimate the region of attraction by certifying that state-action pairs achieve the $-L_{\Delta v} \tau$ decrease condition in~\cref{thm:gp_region_of_attraction} by learning about state-action pairs in~\SafeSet. We can then generalize the gained knowledge to unseen states by exploiting the Lipschitz continuity,
\begin{align}
  &\Rsafe(\SafeSet) = \SafeSet_0 \hspace{-0.2em}\cup\hspace{-0.2em}  \big\{ \ab \in \mathcal{X}_\tau \times \mathcal{U}_\tau  \, | \, \exists (\xb, \ub) \in \SafeSet \colon
  v(f(\xb, \ub))
  \hspace{-0.2em}-\hspace{-0.2em} v(\xb)
  \hspace{-0.2em}+\hspace{-0.2em} \epsilon
  \hspace{-0.2em}+\hspace{-0.2em} L_{\Delta v} \| \ab \hspace{-0.2em}-\hspace{-0.2em} (\xb, \ub) \|_1 \hspace{-0.2em}< \hspace{-0.2em}-L_{\Delta v} \tau \big\},
  \label{eq:Rdec}
\end{align}
where we use that we can learn $v(f(\xb, \ub))$ up to $\epsilon$ accuracy within~$\SafeSet$. We specifically include~$\SafeSet_0$ in this set, to allow for initial policies that are safe, but does not meet the strict decrease requirements of~\cref{thm:gp_region_of_attraction}. Given that all states in~$\Rsafe(\SafeSet)$ fulfill the requirements of~\cref{thm:gp_region_of_attraction}, we can estimate the corresponding region of attraction by committing to a control policy~${\pi \in \Pi_L}$ and estimating the largest safe level set of the Lyapunov function. With~$\mathcal{D}=\Rsafe(\SafeSet)$, the operator
\begin{equation}
  \Rlevel(\DecSet) = \mathcal{V} \big(
  \argmax c, \quad \mathrm{such~that~~} \exists \pi \in \Pi_L \colon
  \forall \xb \in \mathcal{V}(c) \cap \mathcal{X}_\tau,\, (\xb, \pi(\xb)) \in \DecSet
\big)
\label{eq:level_set_operator}
\end{equation}
encodes this operation. It optimizes over safe policies~$\pi \in \Pi_L$ to determine the largest level set, such that all state-action pairs $(\xb, \pi(\xb))$ at discrete states~$\xb$ in the level set~$\mathcal{V}(c) \cap \mathcal{X}_\tau$ fulfill the decrease condition of~\cref{thm:gp_region_of_attraction}. As a result,~$\Rlevel(\Rsafe(\SafeSet))$ is an estimate of the largest region of attraction given the $\epsilon$-accurate knowledge about state-action pairs in~$\SafeSet$. Based on this increased region of attraction, there are more states that we can safely learn about. Specifically, we again use the Lipschitz constant and statistical accuracy~$\epsilon$ to determine all states that map back into the region of attraction,
\begin{align}
  &\Reps(\SafeSet) = \SafeSet \hspace{-0.2em}\cup\hspace{-0.2em} \big\{ \ab' \in \Rlevel_\tau (\Rsafe(\SafeSet)) \times \mathcal{U}_\tau  \, | \, \exists \ab \in \SafeSet \colon
  v(f(\ab))
  \hspace{-0.2em} + \hspace{-0.2em} \epsilon
  \hspace{-0.2em} + \hspace{-0.2em} L_v L_f \|\ab \hspace{-0.2em}-\hspace{-0.2em} \ab'\|_1 \leq
  \hspace{-0.3cm}
  \max_{\xb \in \Rlevel(\Rsafe(\SafeSet))}
  \hspace{-0.6cm}
   v(\xb)  \big\},
  \label{math:Reps}
\end{align}
where~${ \Rlevel_\tau(\DecSet) = \Rlevel(\DecSet) \cap \mathcal{X}_\tau }$. Thus, $\Reps(\SafeSet) \supseteq \SafeSet$ contains state-action pairs that we can visit to learn about the system. Repeatedly applying this operator leads the largest set of state-action pairs that any safe algorithm with the same knowledge and restricted to policies in~$\Pi_L$ could hope to reach. Specifically, let ${ \Reps^0(\SafeSet) = \SafeSet }$ and ${\Reps^{i+1}(\SafeSet) = \Reps(\Reps^{i}(\SafeSet))}$. Then
$
  \Rbar_\epsilon(\SafeSet) = \lim_{i \to \infty} \Reps^i(\SafeSet)
$
is the set of all state-action pars on the discrete grid that any algorithm could hope to classify as safe without leaving this safe set. Moreover,~$\Rlevel(\Rbar_\epsilon(\SafeSet))$ is the largest corresponding region of attraction that any algorithm can classify as safe for the given Lyapunov function.

\paragraph{Proofs}
In the following we implicitly assume that the assumptions of~\cref{lem:f_confidence_interval} hold and that $\beta_\ndata$ is defined as specified within~\cref{lem:f_confidence_interval}. Moreover, for ease of notation we assume that $\SafeSet_0^x$ is a level set of the Lyapunov function $v(\cdot)$.

\begin{lemma}
  $\mathcal{V}(c_n) = \Rlevel(\DecSet_\ndata)$ and $c_n = \max_{\xb \in \Rlevel(\DecSet_\ndata)} v(\xb)$
  \label{lem:baseline_algo_transfer}
\end{lemma}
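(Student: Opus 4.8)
The plan is to observe that this lemma is essentially a bridge between the notation of the main text and that of the baseline analysis, and that both the scalar $c_n$ and the level set returned by $\Rlevel$ arise from one and the same constrained optimization.

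First I would compare the two optimization problems side by side. The value $c_n$ in the policy update~\cref{eq:policy_update} is obtained by jointly maximizing $c$ over $(\pi, c) \in \Pi_L \times \mathbb{R}_{>0}$ subject to the requirement that $(\xb, \pi(\xb)) \in \DecSet_\ndata$ for every $\xb \in \mathcal{V}(c) \cap \mathcal{X}_\tau$. The inner optimization in the definition of $\Rlevel(\DecSet_\ndata)$ in~\cref{eq:level_set_operator} instead maximizes $c$ subject to the \emph{existence} of a policy $\pi \in \Pi_L$ satisfying the same per-state membership constraint. Maximizing $c$ under a feasibility-by-existence condition on $\pi$ is exactly the joint maximization over the pair $(\pi, c)$: a value of $c$ is feasible in~\cref{eq:level_set_operator} if and only if there is some $\pi$ rendering $(\pi, c)$ feasible in~\cref{eq:policy_update}. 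Hence the two problems attain the same optimal value of $c$, namely $c_n$, and therefore $\Rlevel(\DecSet_\ndata) = \mathcal{V}(c_n)$, which is the first claim.

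For the second claim I would use the definition of the level set, $\mathcal{V}(c_n) = \{\xb \in \mathcal{X} \setminus \{\mb{0}\} \,|\, v(\xb) \leq c_n\}$, which already gives $\max_{\xb \in \mathcal{V}(c_n)} v(\xb) \leq c_n$. Equality follows once the bound is shown to be attained: since $v$ is continuous with $v(\mb{0}) = 0$ and its level sets are connected by the assumption $\partial v(\xb)/\partial \xb \neq \mb{0}$ for $\xb \neq \mb{0}$, there exists a state on the boundary of $\mathcal{V}(c_n)$ with $v(\xb) = c_n$, so the supremum $c_n$ is achieved. Combining this with the first part yields $c_n = \max_{\xb \in \Rlevel(\DecSet_\ndata)} v(\xb)$.

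I expect no substantial obstacle here: the content is almost entirely the observation that~\cref{eq:policy_update} and the optimization inside~\cref{eq:level_set_operator} coincide. The only point requiring mild care is the attainment of the maximum in the second claim, which rests on the continuity of $v$ together with the nonvanishing-gradient assumption guaranteeing that the sublevel set actually reaches the value $c_n$ on its boundary.
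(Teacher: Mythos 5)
Your proposal is correct and matches the paper's approach: the paper's entire proof is ``Directly by definition, compare~\cref{eq:policy_update} and~\cref{eq:level_set_operator},'' and your argument is precisely the spelled-out version of that comparison (joint maximization over $(\pi,c)$ equals maximization of $c$ under existence of a feasible $\pi$). Your extra care about attainment of the maximum via continuity of $v$ is a reasonable addition the paper leaves implicit.
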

\begin{proof}
  Directly by definition, compare~\cref{eq:policy_update} and~\cref{eq:level_set_operator}.
\end{proof}

\begin{remark}
\cref{lem:baseline_algo_transfer} allows us to write the proofs entirely in terms of operators, rather than having to deal with explicit policies. In the following and in~\cref{alg:theory} we replace~$\mathcal{V}(c_n)$ and~$c_n$ according to~\cref{lem:baseline_algo_transfer}. This moves the definitions closer to the baseline and makes for an easier comparison.
\end{remark}

\begin{algorithm}[t]
  \begin{algorithmic}[1]
    \caption{Theoretical algorithm \label{alg:theory}}
    \setcounter{ALC@unique}{0}
    \STATE{} \textbf{Input:} Initial safe policy $\mathcal{S}_0$, dynamics model~$\mathcal{GP}(\mu(\ab), k(\ab, \ab'))$  \\
    \FORALL{$\ndata = 1, \dots$}
      \STATE{} $\DecSet_\ndata = \bigcup_{ (\xb, \ub) \in \SafeSet_{\ndata - 1}}  \big\{ \ab' \in \mathcal{X}_\tau \times \mathcal{U}_\tau \,|\,
      u_\ndata(\xb, \ub) - v(\xb) + L_{\Delta v} \| \ab' - (\xb, \ub) \|_1 < -L_{\Delta v} \tau \big\},$ \\
      \STATE{} $\pi_n, c_n = \argmax_{\pi \in \Pi_L, c \in \mathbb{R}_{>0}} c, \quad \textnormal{such~that~for all~} \xb \in \mathcal{V}(c) \cap \mathcal{X}_\tau \colon  (\xb, \pi(\xb)) \in \mathcal{D}_\ndata $ \\
      \STATE{} $\SafeSet_\ndata = \bigcup_{\ab \in \SafeSet_{\ndata - 1}} \big\{ \ab' \in \mathcal{V} ( c_n ) \cap \mathcal{X}_\tau \times \mathcal{U}_\tau \,|\,
      u_\ndata(\ab) + L_v L_f \| \ab - \ab' \|_1 \leq c_n \}$
      \STATE{} $\phantom{\SafeSet_\ndata} =  \bigcup_{\ab \in \SafeSet_{\ndata - 1}} \big\{ \ab' \in \Rlevel_\tau(\DecSet_\ndata)  \times \mathcal{U}_\tau \,|\,
      u_\ndata(\ab) + L_v L_f \| \ab - \ab' \|_1 \leq \max_{\xb \in \Rlevel(\DecSet_\ndata)} v(\xb) \}$
      \STATE{} $(\xb_\ndata, \ub_\ndata) = \argmax_{(\xb, \ub) \in \SafeSet_\ndata}  u_\ndata(\xb, \ub) - l_\ndata (\xb, \ub) $ \\
      \STATE{} $\SafeSet_\ndata = \{ \ab \in \mathcal{V}(c_\ndata) \times \mathcal{U}_\tau \,|\, u_n(\ab) \leq c_\ndata \} $
      \STATE{} Update GP with measurements $f(\xb_\ndata, \ub_\ndata) + \epsilon_\ndata$\\
    \ENDFOR{}
  \end{algorithmic}
\label{alg:safe_learning_theory}
\end{algorithm}

We roughly follow the proof strategy in~\citep{Sui2015Safe}, but deal with the additional complexity of having safe sets that are defined in a more difficult way (indirectly through the policy). This is non-trivial and the safe sets are carefully designed in order to ensure that the algorithm works for general nonlinear systems.

We start by listing some fundamental properties of the sets that we defined below.
\begin{lemma}
  It holds for all~$\ndata \geq 1$ that
  \begin{enumerate}[label={(\roman*)},leftmargin=*]
    \item $\forall \ab \in \mathcal{X}_\tau \times \mathcal{U}_\tau,\, u_{\ndata + 1}(\ab) \leq u_\ndata(\ab)$
\label{lem:prop_u_dec}
    \item $\forall \ab \in \mathcal{X}_\tau \times \mathcal{U}_\tau,\, l_{\ndata + 1}(\ab) \geq l_\ndata(\ab)$
\label{lem:prop_l_inc}
    \item $\mathcal{S} \subseteq \mathcal{R} \implies \Rlevel(\mathcal{S}) \subseteq \Rlevel(\mathcal{R})$
\label{lem:prop_Rlevel}
    \item $\mathcal{S} \subseteq \mathcal{R} \implies \Rsafe(\mathcal{S}) \subseteq \Rsafe(\mathcal{R})$
\label{lem:prop_Rsafe}
    \item $\mathcal{S} \subseteq \mathcal{R} \implies \Reps(\mathcal{S}) \subseteq \Reps(\mathcal{R})$
\label{lem:prop_Reps}
    \item $\mathcal{S} \subseteq \mathcal{R} \implies \Rbar_\epsilon(\mathcal{S}) \subseteq \Rbar_\epsilon(\mathcal{R})$
\label{lem:prop_Rbar}
    \item $\SafeSet_\ndata \supseteq \SafeSet_{\ndata - 1} \implies \DecSet_{\ndata+1} \supseteq \DecSet_{\ndata}$
\label{lem:prop_dec_simp2}
    \item $\DecSet_1 \supseteq \SafeSet_0$
\label{lem:prop_dec1_contains_s0}
    \item $\SafeSet_\ndata \supseteq \SafeSet_{\ndata - 1}$
\label{lem:prop_s_inc}
    \item $\DecSet_\ndata \supseteq \DecSet_{\ndata - 1}$
\label{lem:prop_d_inc}
  \end{enumerate}
\label{lem:properties_of_sets}
\end{lemma}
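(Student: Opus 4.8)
The plan is to prove the ten properties roughly in the order listed, treating (i)--(viii) as largely self-contained consequences of the definitions and reserving a single simultaneous induction for the two intertwined monotonicity claims (ix) and (x).

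First I would dispatch the confidence-interval bounds (i) and (ii). Since $\mathcal{C}_\ndata = \mathcal{C}_{\ndata-1} \cap \mathcal{Q}_\ndata$, the sets are nested, $\mathcal{C}_{\ndata+1} \subseteq \mathcal{C}_\ndata$; taking the maximum over a subset can only decrease it and the minimum can only increase it, which gives $u_{\ndata+1} \leq u_\ndata$ and $l_{\ndata+1} \geq l_\ndata$ pointwise. Next come the operator-monotonicity facts (iii)--(vi). For (iii), any pair $(c,\pi)$ that is feasible in the $\Rlevel$-optimization for a set $\mathcal{S}$ stays feasible for any $\mathcal{R} \supseteq \mathcal{S}$, since $(\xb,\pi(\xb)) \in \mathcal{S}$ implies $(\xb,\pi(\xb)) \in \mathcal{R}$; hence the optimal level can only grow and, as level sets are nested in $c$, we get $\Rlevel(\mathcal{S}) \subseteq \Rlevel(\mathcal{R})$. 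For (iv), a point of $\Rsafe(\mathcal{S})$ is either in $\SafeSet_0$ (common to both) or certified by some witness $(\xb,\ub) \in \mathcal{S} \subseteq \mathcal{R}$, and the same witness certifies it for $\mathcal{R}$. Property (v) is where I would be most careful: the input enters $\Reps$ in three places---through the domain restriction $\Rlevel_\tau(\Rsafe(\cdot))$, through the witness set, and through the threshold $\max_{\xb \in \Rlevel(\Rsafe(\cdot))} v(\xb)$ sitting on the right of a ``$\leq$''---and I would verify that enlarging the input pushes all three in the direction that preserves membership, combining (iii), (iv), and the fact that maximizing $v$ over a larger set yields a larger, hence weaker, threshold. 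Property (vi) then follows by induction on the iteration count $i$, applying (v) to obtain $\Reps^i(\mathcal{S}) \subseteq \Reps^i(\mathcal{R})$ and passing to the limit.

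Property (vii) combines (i) with the hypothesis $\SafeSet_\ndata \supseteq \SafeSet_{\ndata-1}$: the defining union of $\DecSet_{\ndata+1}$ ranges over the larger index set $\SafeSet_\ndata$, and replacing $u_\ndata$ by the smaller $u_{\ndata+1}$ only loosens the strict decrease inequality, so every element of $\DecSet_\ndata$ survives into $\DecSet_{\ndata+1}$. For (viii) I would use the initialization $\mathcal{C}_0(\xb,\ub) = (-\infty, v(\xb) - L_{\Delta v}\tau)$ on $\SafeSet_0$, which forces $u_1(\xb,\ub) < v(\xb) - L_{\Delta v}\tau$ because $\mathcal{C}_1 \subseteq \mathcal{C}_0$ and $\mathcal{C}_0$ is the open half-line below that value; taking $\ab' = (\xb,\ub)$ as its own witness makes the decrease inequality hold with zero Lipschitz slack, so $(\xb,\ub) \in \DecSet_1$.

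The main obstacle, and the only genuinely coupled argument, is the pair (ix) and (x), which I would establish by a single induction on $\ndata$. The base case $\SafeSet_1 \supseteq \SafeSet_0$ uses (viii): since $\SafeSet_0^x$ is assumed to be a level set $\mathcal{V}(c_0)$ and $\DecSet_1 \supseteq \SafeSet_0$, the pair $(c_0, \pi_0)$ is feasible in the policy optimization, so $c_1 \geq c_0$ and $\mathcal{V}(c_1) \supseteq \SafeSet_0^x$; each $(\xb,\ub) \in \SafeSet_0$ then satisfies $u_1(\xb,\ub) \leq v(\xb) - L_{\Delta v}\tau \leq c_1$ and lies in $\SafeSet_1$ via the witness $\ab = (\xb,\ub)$. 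For the inductive step, assuming $\SafeSet_{\ndata-1} \supseteq \SafeSet_{\ndata-2}$, property (vii) yields $\DecSet_\ndata \supseteq \DecSet_{\ndata-1}$, which is exactly (x); then (iii) together with \cref{lem:baseline_algo_transfer}, identifying $\mathcal{V}(c_\ndata) = \Rlevel(\DecSet_\ndata)$ and $c_\ndata = \max_{\xb \in \Rlevel(\DecSet_\ndata)} v(\xb)$, gives $c_\ndata \geq c_{\ndata-1}$ and $\mathcal{V}(c_\ndata) \supseteq \mathcal{V}(c_{\ndata-1})$. Finally, for any $\ab' \in \SafeSet_{\ndata-1}$ certified by a witness $\ab \in \SafeSet_{\ndata-2} \subseteq \SafeSet_{\ndata-1}$, the bounds $u_\ndata(\ab) \leq u_{\ndata-1}(\ab)$ from (i) and $c_{\ndata-1} \leq c_\ndata$ show $u_\ndata(\ab) + L_v L_f \|\ab - \ab'\|_1 \leq c_\ndata$ while $\ab' \in \mathcal{V}(c_\ndata)$, so $\ab' \in \SafeSet_\ndata$, completing the step. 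The delicate point throughout is bookkeeping which quantities are monotone increasing ($c_\ndata$, the thresholds, the index sets of the unions) versus decreasing ($u_\ndata$), and checking that each enters its defining inequality on the side that preserves the intended inclusion.
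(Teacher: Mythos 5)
Your proposal is correct and follows essentially the same route as the paper's proof: nestedness of the confidence sets for (i)--(ii), feasibility/witness arguments for the operator monotonicity in (iii)--(vi), the self-witness argument from the initialization of $\mathcal{C}_0$ for (vii)--(viii), and an induction on $\ndata$ (using (vii) and the level-set assumption on $\SafeSet_0^x$) for (ix)--(x). The only cosmetic difference is that you fold (x) into a simultaneous induction with (ix), whereas the paper proves (ix) first and reads off (x) as a corollary of (vii)--(ix); the content is identical.
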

\begin{proof}
  \cref{lem:prop_u_dec,lem:prop_l_inc} follow directly form the definition of~$\mathcal{C}_\ndata$.

  \begin{enumerate}[label={(\roman*)},leftmargin=*]
    \addtocounter{enumi}{2}
    \item Let $\pi \in \Pi_L$ be a policy such that for some $c>0$ it holds for all $\xb \in \mathcal{V}(c) \cap \mathcal{X}_\tau$ that $(\xb, \pi(\xb)) \in \mathcal{S}$. Then we have that $(\xb, \pi(\xb)) \in \mathcal{R}$, since~$\mathcal{S} \subseteq \mathcal{R}$. Thus it follows that
    with
    \begin{equation}
      \begin{aligned}
      c_s = &\argmax c \quad \mathrm{s.t.~} \exists \pi \in \Pi_L \colon  \forall \xb \in \mathcal{V}(c) \cap \mathcal{X}_\tau,\, (\xb, \pi(\xb)) \in \mathcal{S}
      \end{aligned}
    \end{equation}
    and
    \begin{equation}
      \begin{aligned}
      c_r = &\argmax c \quad \mathrm{s.t.~} \exists \pi \in \Pi_L \colon  \forall \xb \in \mathcal{V}(c) \cap \mathcal{X}_\tau,\, (\xb, \pi(\xb)) \in \mathcal{R}
      \end{aligned}
    \end{equation}
    we have that $c_r \geq c_s$. This implies $\mathcal{V}(c_r) \supseteq \mathcal{V}(c_s)$. The result follows.
    \item Let $\ab \in \Rsafe(\mathcal{S})$. Then there exists $(\xb, \ub) \in \mathcal{S}$ such that $v(f(\xb, \ub)) - v(\xb) + \epsilon + L_{\Delta v} \| \ab - (\xb, \ub) \|_1 < -L_{\Delta v} \tau$. Since $\mathcal{S} \subseteq \mathcal{R}$ we have that $(\xb, \ub) \in \mathcal{R}$ as well and thus $\ab \in \Rsafe(\mathcal{R})$.
    \item $\mathcal{S} \subseteq \mathcal{R} \implies \Rlevel(\Rsafe(\mathcal{S})) \subseteq \Rlevel(\Rsafe(\mathcal{R}))$ due to~\cref{lem:prop_Rlevel,lem:prop_Rsafe}. Since $\ab' \in \Reps(\mathcal{S})$, there must exist an $\ab \in \mathcal{S}$ such that $v(f(\ab)) + \epsilon + L_v L_f \| \ab - \ab' \|_1 \leq \max_{\xb \in \Rlevel(\Rsafe(\mathcal{S}))} v(\xb)$. Since $\mathcal{S} \subseteq \mathcal{R}$ it follows that $\ab \in \mathcal{R}$. Moreover,
    \begin{equation}
      \max_{\xb \in \Rlevel(\Rsafe(\mathcal{S}))} v(\xb) \leq \max_{\xb \in \Rlevel(\Rsafe(\mathcal{R}))} v(\xb)
    \end{equation}
    follows from  $\Rlevel(\Rsafe(\mathcal{S})) \subseteq \Rlevel(\Rsafe(\mathcal{R}))$, so that we conclude that $\ab' \in \Reps(\mathcal{R})$.
    \item This follows directly by repeatedly applying the result of~\cref{lem:prop_Reps}.
    \item Let $\ab' \in \DecSet_{\ndata}$. Then $\exists (\xb, \ub) \in \SafeSet_{\ndata - 1} \colon u_\ndata(\xb, \ub) - v(\xb) + L_{\Delta v} \| \ab' - (\xb, \ub) \|_1 < -L_{\Delta v} \tau$. Since $\SafeSet_\ndata \supseteq \SafeSet_{\ndata - 1}$ it follows that $(\xb, \ub) \in \SafeSet_{\ndata}$ as well. Moreover, we have
    \begin{align*}
      & u_{\ndata + 1}(\xb, \ub) - v(\xb) + L_{\Delta v} \| \ab' - (\xb, \ub) \|_1 \\
      \leq{}& u_{\ndata}(\xb, \ub) - v(\xb) + L_{\Delta v} \| \ab' - (\xb, \ub) \|_1 < -L_{\Delta v} \tau
      \end{align*}
      since $u_{\ndata + 1}$ is non-increasing, see~\cref{lem:prop_u_dec}. Thus $\ab' \in \DecSet_{\ndata + 1}$.
    \item By definition of $\mathcal{C}_0$ we have for all $(\xb, \ub) \in \SafeSet_0$ that $u_0(\xb, \ub) < v(\xb) - L_{\Delta v} \tau$. Now we have that
    \begin{align*}
      &u_1(\xb, \ub) - v(\xb) + L_{\Delta v} \| (\xb, \ub) - (\xb, \ub) \|_1,  \\
      ={}& u_1(\xb, \ub) - v(\xb), \\
      \leq{}& u_0(\xb, \ub) - v(\xb) , \quad \text{by~\cref{lem:properties_of_sets} \cref{lem:prop_u_dec}} \\
      <{}& -L_{\Delta v} \tau ,
    \end{align*}
    which implies that $(\xb, \ub) \in \DecSet_1$.
    \item Proof by induction. We consider the base case, $\ab \in \SafeSet_0$, which implies that $\ab \in \DecSet_1$ by~\cref{lem:prop_dec1_contains_s0}. Moreover, since $\SafeSet_0^x$ is a level set of the Lyapunov function $v$ by assumption, we have that $\Rlevel(\SafeSet_0) = \SafeSet^x_0$. The previous statements together with~\cref{lem:prop_Rlevel} imply that $\ab \in \Rlevel_\tau(\DecSet_1) \times \mathcal{U}_\tau$, since~$\DecSet_1 \supseteq \SafeSet_0$ by~\cref{lem:prop_dec1_contains_s0}. Now, we have that
    \begin{align*}
      u_1(\ab) + L_v L_f \| \ab - \ab \|_1 &= u_1(\ab)  \overset{\cref{lem:prop_u_dec}}{\leq} u_0(\ab).
    \end{align*}
    Moreover, by definition of $\mathcal{C}_0$, we have for all $(\xb, \ub) \in \SafeSet_0$ that $$u_0(\xb, \ub) < v(\xb) - L_{\Delta v} \tau < v(\xb).$$ As a consequence,
    \begin{align}
    u_0(\xb, \ub) &\leq \max_{(\xb, \ub) \in \SafeSet_0} v(\xb), \\
    &= \max_{\xb \in \Rlevel(\SafeSet_0)} v(\xb), \\
    &\leq \max_{\xb \in \Rlevel(\DecSet_1)} v(\xb),
    \end{align}
    where the last inequality follows from~\cref{lem:prop_dec1_contains_s0,lem:prop_Rlevel}. Thus we have $\ab \in \SafeSet_1$.

    For the induction step, assume that for $\ndata \geq 2$ we have $\ab' \in \SafeSet_\ndata$ with $\SafeSet_\ndata \supseteq \SafeSet_{\ndata - 1}$. Now since $\ab' \in \SafeSet_\ndata$ we must have that $\ab' \in \Rlevel_\tau(\DecSet_{\ndata}) \times \mathcal{U}_\tau$. This implies that $\ab' \in \Rlevel_\tau(\mathcal{D}_{\ndata + 1}) \times \mathcal{U}_\tau$, due to \cref{lem:properties_of_sets}~\cref{lem:prop_dec_simp2,lem:prop_Rlevel} together with the induction assumption of $\SafeSet_\ndata \supseteq \SafeSet_{\ndata - 1}$. Moreover, there must exist a $\ab \in \SafeSet_{\ndata - 1} \subseteq \SafeSet_\ndata$ such that
    \begin{align}
      u_{\ndata + 1}(\ab)  + L_v L_f \| \ab - \ab' \|_1 ,
      &\leq u_\ndata (\ab)  + L_v L_f \| \ab - \ab' \|_1 , \\
      &\leq \max_{\xb \in \Rlevel(\DecSet_{\ndata})} v(\xb), \\
      &\leq \max_{\xb \in \Rlevel(\DecSet_{\ndata + 1})} v(\xb) ,
    \end{align}
    which in turn implies $\ab \in \SafeSet_{\ndata + 1}$. The last inequality follows from~\cref{lem:properties_of_sets}~\cref{lem:prop_Rlevel,lem:prop_dec_simp2} together with the induction assumption that $\SafeSet_\ndata \supseteq \SafeSet_{\ndata - 1}$.
    \item This is a direct consequence of~\cref{lem:prop_dec_simp2},~\cref{lem:prop_dec1_contains_s0}, and~\cref{lem:prop_s_inc}.

  \end{enumerate}
\end{proof}

Given these set properties, we first consider what happens if the safe set $\SafeSet_\ndata$ does not expand after collecting data points. We use these results later to conclude that the safe set must either expand or that the maximum level set is reached. We denote
by $$\ab_\ndata = (\xb_\ndata, \ub_\ndata)$$ the data point the is sampled according to~\cref{eq:sampling_criterion}.


\begin{lemma}
  For any $\ndata_1 \geq \ndata_0 \geq 1$, if $\SafeSet_{\ndata_1} = \SafeSet_{\ndata_0}$, then for any $\ndata$ such that $\ndata_0 \leq \ndata < \ndata_1$, it holds that
  \begin{equation}
    2 \beta_\ndata \sigma_{\ndata}(\ab_\ndata) \leq \sqrt{ \frac{C_1 q \beta^2_\ndata \gamma_\ndata}{\ndata - \ndata_0}},
  \end{equation}
  where $C_1 = 8 / \log(1 + \sigma^{-2})$.
\label{lem:f_uncertainty_bound}
\end{lemma}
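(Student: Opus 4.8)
The plan is to adapt the standard ``no expansion implies small uncertainty'' argument from the safe Bayesian optimization literature \citep{Srinivas2012Gaussian} to our frozen-safe-set setting. First I would fix the sampling domain over the window: since \cref{lem:properties_of_sets}~\cref{lem:prop_s_inc} gives $\SafeSet_{\ndata - 1} \subseteq \SafeSet_\ndata$, the hypothesis $\SafeSet_{\ndata_1} = \SafeSet_{\ndata_0}$ forces $\SafeSet_\ndata = \SafeSet_{\ndata_0}$ for every $\ndata_0 \leq \ndata \leq \ndata_1$. Thus every query $\ab_\ndata$ selected by \cref{eq:sampling_criterion} in this window maximizes the confidence width over one and the same set $\SafeSet_{\ndata_0}$.

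Next I would establish monotonicity of the maximal width. By \cref{lem:properties_of_sets}~\cref{lem:prop_u_dec,lem:prop_l_inc}, $u_\ndata$ is non-increasing and $l_\ndata$ non-decreasing in $\ndata$ (a consequence of the intersection $\mathcal{C}_\ndata = \mathcal{C}_{\ndata - 1} \cap \mathcal{Q}_\ndata$), so the width $u_\ndata(\ab) - l_\ndata(\ab)$ is non-increasing in $\ndata$ for each fixed $\ab$. Because the domain is frozen and $\ab_\ndata$ attains the maximal width, the width evaluated at the sampled point is non-increasing over the window. Consequently the width at the target index $\ndata$ is dominated by the average of the widths over $\tau = \ndata_0, \dots, \ndata - 1$, which converts the desired per-round bound into a sum: $(\ndata - \ndata_0)\,[2\beta_\ndata \sigma_\ndata(\ab_\ndata)]^2 \leq \sum_{\tau = \ndata_0}^{\ndata - 1} [2 \beta_\tau \sigma_{\tau - 1}(\ab_\tau)]^2$, after bounding the intersected width at each step by the raw Gaussian process confidence width and using $\beta_\tau \leq \beta_\ndata$.

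The workhorse is then the information-capacity bound of \citet{Srinivas2012Gaussian}: for any query sequence, the posterior variances satisfy $\sum_\tau \sigma_{\tau - 1}^2(\ab_\tau) \leq \tfrac{2}{\log(1 + \sigma^{-2})} \gamma_\ndata$, which follows from the elementary inequality $\sigma_{\tau-1}^2(\ab_\tau) \leq \tfrac{1}{\log(1+\sigma^{-2})}\log(1 + \sigma^{-2}\sigma_{\tau-1}^2(\ab_\tau))$ together with $\tfrac12 \sum_\tau \log(1 + \sigma^{-2}\sigma_{\tau-1}^2(\ab_\tau)) = \mathrm{I}(\mb{y}; \mb{f}) \leq \gamma_\ndata$. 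The only extra bookkeeping is the factor $\nstate$: since $\sigma_\ndata(\ab) = \sum_{i = 1}^{\nstate} \sigma_\ndata(\ab, i)$ by \cref{rem:multi_dim_gp}, Cauchy--Schwarz gives $\sigma_\ndata^2(\ab) \leq \nstate \sum_i \sigma_\ndata^2(\ab, i)$, and each iteration contributes $\nstate$ scalar measurements to $\gamma_\ndata$. Chaining these estimates yields $(\ndata - \ndata_0)[2\beta_\ndata \sigma_\ndata(\ab_\ndata)]^2 \leq 4\beta_\ndata^2 \cdot \tfrac{2\nstate}{\log(1 + \sigma^{-2})} \gamma_\ndata = C_1 \nstate \beta_\ndata^2 \gamma_\ndata$, and taking square roots gives the claim.

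I expect the main obstacle to be the monotonicity and indexing bookkeeping of the second step rather than the information-theoretic estimate, which is standard. Concretely, one must verify that the quantity shown to be non-increasing over the window is exactly the one appearing in the statement---reconciling the intersected sets $\mathcal{C}_\ndata$, the composition with the Lyapunov function $v$ inside $\mathcal{Q}_\ndata$, and the $\sigma_\ndata$ versus $\sigma_{\ndata - 1}$ indexing---so that the max-width selection rule \cref{eq:sampling_criterion} legitimately lets a single-round width be controlled by the window average. Once the correct non-increasing width is identified, the remaining steps are routine applications of Cauchy--Schwarz and the Srinivas bound.
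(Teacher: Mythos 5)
Your proposal follows essentially the same route as the paper: the factor $q$ from Cauchy--Schwarz/Jensen over the output dimensions of \cref{rem:multi_dim_gp}, the standard bound $\sigma_{\tau-1}^2 \lesssim \log(1+\sigma^{-2}\sigma_{\tau-1}^2)$ summed into the information capacity~$\gamma_\ndata$, and the frozen-safe-set/max-width argument to convert the sum into a per-round bound (a step the paper compresses into ``follows analogously to Lemma~5.4 of \citep{Srinivas2012Gaussian}'' but that you spell out, correctly flagging the $\sigma_\ndata$ versus $\sigma_{\ndata-1}$ and intersected-versus-raw-width bookkeeping as the only delicate point). The constants chain to $C_1 = 8/\log(1+\sigma^{-2})$ exactly as in the paper, so the proposal is correct and matches the intended proof.
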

\begin{proof}
  We modify the results for $q=1$ by~\citep{Srinivas2012Gaussian} to this lemma, but use the different definition for~$\beta_\ndata$ from~\cite{Chowdhury2017Kernelized}. Even though the goal of \citep[Lemma 5.4]{Srinivas2012Gaussian} is different from ours, we can still apply their reasoning to bound the amplitude of the confidence interval of the dynamics. In particular, in~\citep[Lemma 5.4]{Srinivas2012Gaussian}, we have $r_\ndata = 2 \beta_\ndata \sigma_{\ndata - 1}(\ab_\ndata)$ with $\ab_\ndata = (\xb_\ndata, \ub_\ndata)$ according to~\cref{eq:f_confidence_interval}. Then
  \begin{align}
    r_\ndata^2 &= 4 \beta^2_\ndata \sigma^2_{\ndata-1}(\ab_\ndata), \\
    &= 4 \beta^2_\ndata {\left( \sum_{i=1}^q \sigma_{\ndata-1}(\ab_\ndata, i) \right)}^2, \\
    &\leq 4 \beta^2_\ndata q \sum_{i=1}^q \sigma^2_{\ndata-1}(\ab_\ndata, i) \qquad \text{(Jensen's ineq.)}, \\
    &\leq 4 \beta^2_\ndata q \sigma^2 C_2 \sum_{i=1}^q \log(1 + \sigma^{-2} \sigma_{\ndata - 1}^2(\ab_\ndata, i)),
  \end{align}
  where $C_2 = \sigma^{-2} / \log(1 + \sigma^{-2})$. The result then follows analogously to~\citep[Lemma 5.4]{Srinivas2012Gaussian} by noting that
  \begin{equation}
      \sum_{j= 1}^{\ndata} r_j^2 \leq C_1 \beta^2_\ndata q \gamma_\ndata \qquad \forall \ndata \geq 1
  \end{equation}
  according to the definition of~$\gamma_\ndata$ in this paper and using the Cauchy-Schwarz inequality.
\end{proof}

The previous result allows us to bound the width of the confidence intervals:
\begin{corollary}
  For any $\ndata_1 \geq \ndata_0 \geq 1$, if $\SafeSet_{\ndata_1} = \SafeSet_{\ndata_0}$, then for any $\ndata$ such that $\ndata_0 \leq \ndata < \ndata_1$, it holds that
  \begin{equation}
    u_\ndata(\ab_\ndata) - l_\ndata(\ab_\ndata) \leq L_v \sqrt{ \frac{C_1 q \beta^2_\ndata \gamma_\ndata}{\ndata - \ndata_0}},
  \end{equation}
  where $C_1 = 8 / \log(1 + \sigma^{-2})$.
\label{cor:vf_uncertainty_bound}
\end{corollary}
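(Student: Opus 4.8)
The plan is to reduce the corollary to \cref{lem:f_uncertainty_bound} by relating the width of the clipped confidence interval $\mathcal{C}_\ndata$ to the posterior standard deviation of the GP. First I would observe that, by the recursive definition $\mathcal{C}_\ndata = \mathcal{C}_{\ndata-1} \cap \mathcal{Q}_\ndata$, we always have $\mathcal{C}_\ndata(\ab) \subseteq \mathcal{Q}_\ndata(\ab)$, so the length of $\mathcal{C}_\ndata(\ab)$ is bounded by the length of $\mathcal{Q}_\ndata(\ab)$. Since $\mathcal{Q}_\ndata(\ab) = [\, v(\mu_{\ndata-1}(\ab)) \pm L_v \beta_\ndata \sigma_{\ndata-1}(\ab) \,]$ is an interval centered at $v(\mu_{\ndata-1}(\ab))$ with half-width $L_v \beta_\ndata \sigma_{\ndata-1}(\ab)$, its length is exactly $2 L_v \beta_\ndata \sigma_{\ndata-1}(\ab)$. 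Hence for every $\ab$ and every $\ndata$ we get $u_\ndata(\ab) - l_\ndata(\ab) \leq 2 L_v \beta_\ndata \sigma_{\ndata-1}(\ab)$.

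The second step is to specialize this to the sampled point $\ab = \ab_\ndata$ and to factor $L_v$ out of the right-hand side, writing $u_\ndata(\ab_\ndata) - l_\ndata(\ab_\ndata) \leq L_v \cdot 2 \beta_\ndata \sigma_{\ndata-1}(\ab_\ndata)$. The hypotheses of the corollary, namely $\SafeSet_{\ndata_1} = \SafeSet_{\ndata_0}$ with $\ndata_0 \leq \ndata < \ndata_1$, coincide exactly with those of \cref{lem:f_uncertainty_bound}, so I can directly substitute the bound $2\beta_\ndata\sigma_{\ndata-1}(\ab_\ndata) \leq \sqrt{C_1 q \beta^2_\ndata \gamma_\ndata / (\ndata - \ndata_0)}$ supplied by that lemma into the factored inequality. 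Multiplying through by $L_v$ then yields precisely $u_\ndata(\ab_\ndata) - l_\ndata(\ab_\ndata) \leq L_v \sqrt{C_1 q \beta^2_\ndata \gamma_\ndata / (\ndata - \ndata_0)}$, which is the claim.

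There is essentially no hard step here: the corollary is a one-line consequence of the preceding lemma once the confidence-interval width is expressed through $\sigma_{\ndata-1}$. The only subtlety worth checking is the index bookkeeping, since \cref{lem:f_uncertainty_bound} is stated with $\sigma_\ndata$ while the confidence set $\mathcal{Q}_\ndata$ (and the lemma's own proof, via $r_\ndata = 2\beta_\ndata\sigma_{\ndata-1}(\ab_\ndata)$) is built from $\sigma_{\ndata-1}$; I would make sure the two off-by-one conventions are used consistently so that the factor of $L_v$ is the only quantity introduced relative to the lemma. The monotone-clipping inclusion $\mathcal{C}_\ndata \subseteq \mathcal{Q}_\ndata$ is immediate from the intersection, so no separate argument about the initialization $\mathcal{C}_0$ is required for this bound.
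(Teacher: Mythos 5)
Your proposal is correct and matches the paper's (one-line) proof, which likewise derives the corollary directly from \cref{lem:f_uncertainty_bound} together with the definitions of $\mathcal{C}_\ndata$ and $\mathcal{Q}_\ndata$: the interval $\mathcal{C}_\ndata \subseteq \mathcal{Q}_\ndata$ has width at most $2L_v\beta_\ndata\sigma_{\ndata-1}(\ab_\ndata)$, and the lemma bounds that quantity. Your remark about the $\sigma_\ndata$ versus $\sigma_{\ndata-1}$ indexing is a fair observation about a notational inconsistency in the paper, and you resolve it the right way.
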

\begin{proof}
  Direct consequence of~\cref{lem:f_uncertainty_bound} together with the definition of~$\mathcal{C}$ and $\mathcal{Q}$.
\end{proof}

\begin{corollary}
  For any ${\ndata \geq 1}$ with $C_1$ as defined in~\cref{lem:f_uncertainty_bound}, let $N_\ndata$ be the smallest integer satisfying $\frac{N_\ndata}{\beta^2_{\ndata + N_\ndata} \gamma{\ndata + N_\ndata}} \geq \frac{C_1 L_v^2 q}{\epsilon^2}$ and $\SafeSet_{\ndata + N_\ndata} = \SafeSet_{N_\ndata}$, then, for any $\ab \in \SafeSet_{\ndata + N_\ndata}$ it holds that
  \begin{equation}
    u_\ndata(\ab) - l_\ndata(\ab) \leq \epsilon.
  \end{equation}
\label{cor:Nstar_definition}
\end{corollary}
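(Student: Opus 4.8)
The plan is to combine the per-iteration uncertainty bound of~\cref{cor:vf_uncertainty_bound} with the monotonicity of both the confidence intervals and the safe sets, so that a width bound that holds at the \emph{sampled} point at one iteration is transferred to \emph{every} point of the safe set at the last iteration of the non-expanding window.

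First I would record the structure forced by the hypothesis. Reading the non-expansion assumption as $\SafeSet_{\ndata + N_\ndata} = \SafeSet_\ndata$, the monotonicity of the safe sets (\cref{lem:properties_of_sets}\,\cref{lem:prop_s_inc}) gives $\SafeSet_m = \SafeSet_\ndata$ for every $m$ with $\ndata \le m \le \ndata + N_\ndata$; in particular the sampling rule~\cref{eq:sampling_criterion} is maximizing $u_m - l_m$ over one and the same set throughout the window, so $\ab_m$ is the maximizer of the width over $\SafeSet_{\ndata+N_\ndata}$ at each such $m$. I would also note the two facts that follow directly from the definition of $\mathcal{C}_\ndata$: the upper bound is non-increasing in the iteration index (\cref{lem:properties_of_sets}\,\cref{lem:prop_u_dec}) and the lower bound is non-decreasing (\cref{lem:properties_of_sets}\,\cref{lem:prop_l_inc}), hence the width $u_m(\ab) - l_m(\ab)$ is non-increasing in $m$ for every fixed $\ab$.

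With these in hand, the core chain of inequalities, valid for any $\ab \in \SafeSet_{\ndata+N_\ndata}$ and any intermediate index $m$ in the window, is
\[
  u_{\ndata+N_\ndata}(\ab) - l_{\ndata+N_\ndata}(\ab)
  \le u_m(\ab) - l_m(\ab)
  \le u_m(\ab_m) - l_m(\ab_m)
  \le L_v \sqrt{ \frac{C_1 q \beta_m^2 \gamma_m}{m - \ndata} }.
\]
The first step uses the non-increasing width, the second uses that $\ab_m$ maximizes the width over $\SafeSet_m = \SafeSet_{\ndata+N_\ndata}$, and the third is precisely~\cref{cor:vf_uncertainty_bound} applied to the non-expanding window (taking $\ndata_0 = \ndata$, $\ndata_1 = \ndata + N_\ndata$). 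It then remains to pick $m$ as large as the window permits and to use that $\beta_m$ and $\gamma_m$ are non-decreasing, so that $\beta_m \le \beta_{\ndata+N_\ndata}$ and $\gamma_m \le \gamma_{\ndata+N_\ndata}$; substituting the defining inequality $\tfrac{N_\ndata}{\beta_{\ndata+N_\ndata}^2 \gamma_{\ndata+N_\ndata}} \ge \tfrac{C_1 L_v^2 q}{\epsilon^2}$ (which is exactly the threshold making the right-hand side equal $\epsilon$ after squaring) collapses the bound to at most $\epsilon$.

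The main obstacle I anticipate is purely index bookkeeping. Since~\cref{cor:vf_uncertainty_bound} is stated for strictly interior indices $m < \ndata_1$, the honest denominator one may place is $m - \ndata \le N_\ndata - 1$ rather than $N_\ndata$, so one must absorb this off-by-one — using the monotonicity of $\beta_m \gamma_m$ together with the ``smallest integer'' choice of $N_\ndata$ — so that the clean threshold condition still yields width at most $\epsilon$ at iteration $\ndata + N_\ndata$. Once the window length and the $\beta,\gamma$ monotonicity are reconciled, the bound for every $\ab \in \SafeSet_{\ndata+N_\ndata}$ follows at once.
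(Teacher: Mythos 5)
Your proposal is correct and follows essentially the same route as the paper, whose proof of~\cref{cor:Nstar_definition} is simply ``substitute $N_\ndata$ into the bound of~\cref{cor:vf_uncertainty_bound}''; you make explicit the steps the paper leaves implicit, namely that the sampled point~\cref{eq:sampling_criterion} maximizes the width over the unchanged safe set, that the width is non-increasing by~\cref{lem:properties_of_sets}~\cref{lem:prop_u_dec,lem:prop_l_inc}, and the off-by-one in the denominator. These are exactly the right details to fill in, so no further comparison is needed.
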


\begin{proof}
The result trivially follows from substituting $N_\ndata$ in the bound in \cref{cor:vf_uncertainty_bound}.
\end{proof}

\begin{lemma}
  For any ${\ndata \geq 1}$, if $\Rbar_\epsilon(\SafeSet_0) \setminus \SafeSet_\ndata \neq \emptyset$, then $\Reps(\SafeSet_\ndata) \setminus \SafeSet_\ndata \neq \emptyset$.
\label{lem:can_expand_if_not_done}
\end{lemma}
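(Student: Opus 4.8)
The plan is to prove the contrapositive: assuming the expansion step produces nothing new, $\Reps(\SafeSet_\ndata) \setminus \SafeSet_\ndata = \emptyset$, I would show that the hypothesis $\Rbar_\epsilon(\SafeSet_0) \setminus \SafeSet_\ndata \neq \emptyset$ must fail. The first and key observation is that the baseline operator is inflationary: its definition in~\cref{math:Reps} is of the form $\Reps(\SafeSet) = \SafeSet \cup \{\dots\}$, so $\Reps(\SafeSet_\ndata) \supseteq \SafeSet_\ndata$ always holds. Consequently, $\Reps(\SafeSet_\ndata) \setminus \SafeSet_\ndata = \emptyset$ is equivalent to the fixed-point identity $\Reps(\SafeSet_\ndata) = \SafeSet_\ndata$.

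Next I would propagate this fixed point through the iterated operator used to define $\Rbar_\epsilon$. Because $\Reps^{0}(\SafeSet_\ndata) = \SafeSet_\ndata$ and $\Reps^{i+1}(\SafeSet_\ndata) = \Reps(\Reps^{i}(\SafeSet_\ndata))$, a one-line induction gives $\Reps^{i}(\SafeSet_\ndata) = \SafeSet_\ndata$ for every $i$, so that $\Rbar_\epsilon(\SafeSet_\ndata) = \lim_{i \to \infty} \Reps^{i}(\SafeSet_\ndata) = \SafeSet_\ndata$.

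Finally I would connect the current safe set back to the initial one. Iterating the monotonicity property $\SafeSet_\ndata \supseteq \SafeSet_{\ndata - 1}$ from~\cref{lem:properties_of_sets}~\cref{lem:prop_s_inc} along the chain yields $\SafeSet_0 \subseteq \SafeSet_\ndata$. Feeding this inclusion into the monotonicity of the full-exploration operator, \cref{lem:properties_of_sets}~\cref{lem:prop_Rbar}, gives $\Rbar_\epsilon(\SafeSet_0) \subseteq \Rbar_\epsilon(\SafeSet_\ndata)$, and combining with the fixed-point identity of the previous step produces $\Rbar_\epsilon(\SafeSet_0) \subseteq \SafeSet_\ndata$, i.e. $\Rbar_\epsilon(\SafeSet_0) \setminus \SafeSet_\ndata = \emptyset$. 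This contradicts the hypothesis and completes the contrapositive.

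The argument is almost entirely set-theoretic; all of the statistical and Lipschitz content is already absorbed into the monotonicity properties of~\cref{lem:properties_of_sets}. The one point that needs care — the main (if mild) obstacle — is the second step: I must use that $\Rbar_\epsilon$ is the limit of the \emph{monotonically increasing} nested iterates $\Reps^{i}$, which is exactly what the inflationary property of $\Reps$ guarantees, so that a single fixed point of $\Reps$ already determines the whole limit. Everything else reduces to chaining inclusions.
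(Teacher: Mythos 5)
Your proposal is correct and matches the paper's own argument essentially step for step: the paper likewise uses the inflationary property of $\Reps$ to obtain the fixed point $\Reps(\SafeSet_\ndata) = \SafeSet_\ndata$, iterates it to conclude $\Rbar_\epsilon(\SafeSet_\ndata) = \SafeSet_\ndata$, and then applies the monotonicity properties \cref{lem:prop_s_inc} and \cref{lem:prop_Rbar} of \cref{lem:properties_of_sets} to derive $\Rbar_\epsilon(\SafeSet_0) \subseteq \SafeSet_\ndata$. The only cosmetic difference is that you phrase it as a contrapositive while the paper phrases it as a proof by contradiction.
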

\begin{proof}
  As in~\citep[Lemma 6]{Sui2015Safe}. Assume, to the contrary, that $\Reps(\SafeSet_\ndata) \setminus \SafeSet_\ndata = \emptyset$. By definition~${\Reps(\SafeSet_\ndata) \supseteq \SafeSet_\ndata}$, therefore ${\Reps(\SafeSet_\ndata) = \SafeSet_\ndata}$. Iteratively applying $\Reps$ to both sides, we get in the limit $\Rbar_\epsilon (\SafeSet_\ndata) = \SafeSet_\ndata$. But then, by~\cref{lem:properties_of_sets},\cref{lem:prop_s_inc,lem:prop_Rbar}, we get
  \begin{equation}
    \Rbar_\epsilon(\SafeSet_0) \subseteq \Rbar_\epsilon(\SafeSet_\ndata) = \SafeSet_\ndata,
  \end{equation}
  which contradicts the assumption that $\Rbar_\epsilon(\SafeSet_0) \setminus \SafeSet_\ndata \neq \emptyset$.
\end{proof}

\begin{lemma}
  For any $\ndata \geq 1$, if $\Rbar_\epsilon(\SafeSet_0) \setminus \SafeSet_\ndata \neq \emptyset$, then the following holds with probability at least $1 - \delta$:
  \begin{equation}
    \SafeSet_{\ndata + N_\ndata} \supset \SafeSet_\ndata .
  \end{equation}
\end{lemma}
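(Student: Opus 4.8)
The plan is to argue by contradiction. By \cref{lem:properties_of_sets}\cref{lem:prop_s_inc} the safe sets are monotonically increasing, so $\SafeSet_{\ndata + N_\ndata} \supseteq \SafeSet_\ndata$ always holds and it suffices to rule out equality. Assume therefore that $\SafeSet_{\ndata + N_\ndata} = \SafeSet_\ndata$; I will show this forces the safe set to expand, a contradiction. Throughout I condition on the high-probability event of \cref{cor:vf_contained_in_C}, on which $v(f(\ab)) \in \mathcal{C}_m(\ab)$ for all $\ab$ and all $m$, so that the whole argument holds with probability at least $1-\delta$.

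First I would exploit the no-expansion assumption to control the confidence intervals. Since $\SafeSet$ is unchanged over the window, the sampling rule \cref{eq:sampling_criterion} repeatedly queries the same set, and \cref{cor:Nstar_definition} yields $u_m(\ab) - l_m(\ab) \leq \epsilon$ for every $\ab \in \SafeSet_\ndata$, where $m \eqdef \ndata + N_\ndata$; here I use that the sampled point attains the maximal width over $\SafeSet_\ndata$ and that widths are non-increasing in the iteration (the $\mathcal{C}_m$ are nested). Combined with $l_m(\ab) \leq v(f(\ab))$ from the conditioning event, this gives the one-sided bound
\begin{equation}
  u_m(\ab) \leq v(f(\ab)) + \epsilon \quad \textnormal{for all } \ab \in \SafeSet_\ndata .
\end{equation}
This is the crucial link: the algorithm's optimistic estimate $u_m$ is no larger than the inflated true value $v(f)+\epsilon$ that the oracle operators $\Rsafe, \Reps$ employ in \cref{eq:Rdec,math:Reps}.

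Next I would propagate this inequality through the set definitions to show $\Reps(\SafeSet_\ndata) \subseteq \SafeSet_m$. Starting from the decrease sets, for any $\ab \in \Rsafe(\SafeSet_\ndata)$ either $\ab \in \SafeSet_0$, in which case $\ab \in \DecSet_1 \subseteq \DecSet_m$ by \cref{lem:properties_of_sets}\cref{lem:prop_dec1_contains_s0,lem:prop_d_inc}; or there is a witness $(\xb, \ub) \in \SafeSet_\ndata = \SafeSet_{m-1}$ with $v(f(\xb,\ub)) - v(\xb) + \epsilon + L_{\Delta v}\|\ab - (\xb,\ub)\|_1 < -L_{\Delta v}\tau$, and substituting $u_m(\xb,\ub) \leq v(f(\xb,\ub)) + \epsilon$ shows the defining inequality of $\DecSet_m$ in \cref{eq:gp_dec_set} is met, so $\ab \in \DecSet_m$. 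Hence $\Rsafe(\SafeSet_\ndata) \subseteq \DecSet_m$, and monotonicity of $\Rlevel$ (\cref{lem:properties_of_sets}\cref{lem:prop_Rlevel}) gives $\Rlevel(\Rsafe(\SafeSet_\ndata)) \subseteq \Rlevel(\DecSet_m) = \mathcal{V}(c_m)$ together with $\max_{\xb \in \Rlevel(\Rsafe(\SafeSet_\ndata))} v(\xb) \leq c_m$, using \cref{lem:baseline_algo_transfer}. Feeding these two containments, together with $u_m \leq v(f) + \epsilon$, into the definitions of $\Reps$ in \cref{math:Reps} and $\SafeSet_m$ in \cref{eq:gp_safe_set} shows that every $\ab'$ admitted by the oracle is also admitted by the algorithm, i.e.\ $\Reps(\SafeSet_\ndata) \subseteq \SafeSet_m$. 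Finally, since $\Rbar_\epsilon(\SafeSet_0) \setminus \SafeSet_\ndata \neq \emptyset$, \cref{lem:can_expand_if_not_done} gives $\Reps(\SafeSet_\ndata) \setminus \SafeSet_\ndata \neq \emptyset$, so $\SafeSet_m \supseteq \Reps(\SafeSet_\ndata) \supsetneq \SafeSet_\ndata$, contradicting $\SafeSet_m = \SafeSet_\ndata$.

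The main obstacle I anticipate is the bookkeeping that aligns the algorithm's optimistic quantities with the oracle's $\epsilon$-inflated ones across the two nested operators: I must track the inequality direction so that $u_m \leq v(f)+\epsilon$ makes the algorithm's admission criteria no harder than the oracle's, and I must match the level-set thresholds, showing $c_m = \max_{\xb \in \Rlevel(\DecSet_m)} v(\xb)$ dominates $\max_{\xb \in \Rlevel(\Rsafe(\SafeSet_\ndata))} v(\xb)$ rather than merely comparing the underlying sets. A secondary subtlety is the precise index at which \cref{cor:Nstar_definition} delivers the width bound, and justifying, via monotonicity of the confidence sets and the max-width sampling rule, that it transfers to the index $m$ and to all points of $\SafeSet_\ndata$ simultaneously.
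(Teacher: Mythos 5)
Your proposal is correct and follows essentially the same route as the paper's proof: both argue by contradiction, invoke \cref{lem:can_expand_if_not_done}, use \cref{cor:Nstar_definition} to get $u_{\ndata+N_\ndata} \leq v(f(\cdot))+\epsilon$ on the stagnant safe set, establish $\Rsafe(\SafeSet_\ndata) \subseteq \DecSet_{\ndata+N_\ndata}$ by the same substitution, and push through $\Rlevel$-monotonicity to show the oracle-admitted point must lie in $\SafeSet_{\ndata+N_\ndata}$. The only cosmetic difference is that you prove the full containment $\Reps(\SafeSet_\ndata) \subseteq \SafeSet_{\ndata+N_\ndata}$ (and explicitly handle the $\SafeSet_0$ branch of $\Rsafe$), whereas the paper traces a single witness element; the substance is identical.
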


\begin{proof}
  By~\cref{lem:can_expand_if_not_done}, we have that~$\Reps(\SafeSet_\ndata) \setminus \SafeSet_\ndata \neq \emptyset$. By definition, this means that
  there exist $\ab \in \Reps(\SafeSet_\ndata) \setminus \SafeSet_\ndata$ and $\ab' \in \SafeSet_\ndata$ such that
  \begin{align}
    v(f(\ab')) + \epsilon + L_v L_f \|\ab - \ab' \|_1 \leq \max_{\xb \in \Rlevel(\Rsafe(\SafeSet_\ndata))} v(\xb)
    \label{eq:show_increase}
  \end{align}

  Now we assume, to the contrary, that $\SafeSet_{\ndata + N_\ndata} = \SafeSet_\ndata$ (the safe set cannot decrease due to~\cref{lem:properties_of_sets} \cref{lem:prop_s_inc}). This implies that $\ab \in \mathcal{X}_\tau \times \mathcal{U}_\tau \setminus \SafeSet_{\ndata + N_\ndata}$ and $\ab' \in \SafeSet_{\ndata + N_\ndata} = \SafeSet_{\ndata + N_\ndata - 1}$. Due to~\cref{cor:vf_uncertainty_bound}, it follows that
  \begin{align}
    & u_{\ndata + N_\ndata}(\ab') + L_v L_f \|\ab - \ab' \|_1 \\
    \leq{}& v(f(\ab')) + \epsilon  + L_v L_f \|\ab - \ab' \|_1 \\
    \leq{}& \max_{\xb \in \Rlevel(\Rsafe(\SafeSet_\ndata))} v(\xb)   && \text{by~\cref{eq:show_increase}} \\
    ={}& \max_{\xb \in \Rlevel(\Rsafe(\SafeSet_{\ndata + N_\ndata}))} v(\xb) && \text{by~\cref{lem:prop_Rlevel,lem:prop_Rsafe,lem:prop_s_inc}}
  \end{align}
  Thus, to conclude that~$\ab \in \SafeSet_{\ndata + N_\ndata}$ according to~\cref{eq:gp_safe_set}, we need to show that ${ \Rlevel(\DecSet_{\ndata + N_\ndata}) \supseteq \Rlevel(\Rsafe(\SafeSet_\ndata)) }$. To this end, we use~\cref{lem:properties_of_sets}~\cref{lem:prop_Rlevel} and show that ${\DecSet_{\ndata + N_\ndata} \supseteq \Rsafe(\mathcal{S}_{\ndata + N_\ndata})}$. Consider $(\xb, \ub) \in \Rsafe(\mathcal{S}_{\ndata + N_\ndata})$, we know that there exists a $(\xb',\ub') \in \SafeSet_{\ndata + N_\ndata} = \SafeSet_{\ndata + N_\ndata - 1}$ such that
  \begin{align}
-L_{\Delta v} \tau &> v(f(\xb', \ub')) - v(\xb') + \epsilon + L_{\Delta v} \| (\xb, \ub) - (\xb', \ub') \|_1 , \\
&\geq u_{\ndata + N_\ndata}(\xb', \ub') - v(\xb') + L_{\Delta v} \| (\xb, \ub) - (\xb', \ub') \|_1 ,
  \end{align}
  where the second inequality follows from \cref{cor:vf_uncertainty_bound}. This implies that~$(\xb, \ub) \in \DecSet_\ndata$ and thus $\DecSet_{\ndata + N_\ndata} \supseteq \Rsafe(\mathcal{S}_{\ndata + N_\ndata})$. This, in turn, implies that $\ab \in \SafeSet_{\ndata + N_\ndata}$, which is a contradiction.
\end{proof}

\begin{lemma}
  For any $\ndata \geq 0$, the following holds with probability at least $1-\delta$:
  \begin{equation}
    \SafeSet_\ndata \subseteq \Rbar_0(\SafeSet_0).
  \end{equation}
\label{lem:Sn_contained_in_Rbar0}
\end{lemma}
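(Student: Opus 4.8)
The plan is to prove the inclusion by induction on $\ndata$, exploiting that the algorithm's upper confidence bound $u_\ndata$ overestimates the true one-step Lyapunov value, so that every set the algorithm certifies is also certified by the better-informed baseline operators evaluated at $\epsilon = 0$. The high-probability qualifier comes entirely from conditioning on the event of \cref{cor:vf_contained_in_C}, under which $v(f(\xb,\ub)) \le u_\ndata(\xb,\ub)$ holds jointly for all $\ndata$ (since $u_\ndata = \max \mathcal{C}_\ndata$ and $v(f)\in\mathcal{C}_\ndata$); I work on this event throughout, which yields the stated probability $1-\delta$. For the base case $\ndata = 0$, the claim $\SafeSet_0 \subseteq \Rbar_0(\SafeSet_0)$ is immediate: $\Rbar_0(\SafeSet_0) = \lim_{i\to\infty}\Reps^i(\SafeSet_0)$ and $\Reps(\mathcal{S}) \supseteq \mathcal{S}$ by \cref{math:Reps}, so the sequence $\Reps^i(\SafeSet_0)$ is non-decreasing, stabilizes on the finite grid, and its limit both contains $\SafeSet_0$ and satisfies the fixed-point identity $\Reps(\Rbar_0(\SafeSet_0)) = \Rbar_0(\SafeSet_0)$.

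For the inductive step I assume $\SafeSet_{\ndata-1} \subseteq \Rbar_0(\SafeSet_0)$. The first key step is to show $\DecSet_\ndata \subseteq \Rsafe(\Rbar_0(\SafeSet_0))$ at $\epsilon = 0$. Given $\ab' \in \DecSet_\ndata$ from \cref{eq:gp_dec_set}, there is $(\xb,\ub) \in \SafeSet_{\ndata-1}$ with $u_\ndata(\xb,\ub) - v(\xb) + L_{\Delta v}\|\ab' - (\xb,\ub)\|_1 < -L_{\Delta v}\tau$. Replacing $u_\ndata(\xb,\ub)$ by the smaller $v(f(\xb,\ub))$ preserves the strict inequality, and the induction hypothesis gives $(\xb,\ub) \in \Rbar_0(\SafeSet_0)$, so $\ab'$ satisfies the defining condition of $\Rsafe$ in \cref{eq:Rdec} with $\epsilon=0$. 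Monotonicity of $\Rlevel$ (\cref{lem:properties_of_sets}~\cref{lem:prop_Rlevel}) then yields $\Rlevel(\DecSet_\ndata) \subseteq \Rlevel(\Rsafe(\Rbar_0(\SafeSet_0)))$, and taking maxima gives $\max_{\xb \in \Rlevel(\DecSet_\ndata)} v(\xb) \le \max_{\xb \in \Rlevel(\Rsafe(\Rbar_0(\SafeSet_0)))} v(\xb)$.

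The second key step matches the definition of $\SafeSet_\ndata$ against that of $\Reps$. Using the equivalent form of \cref{eq:gp_safe_set} (cf.\ \cref{alg:theory}) together with \cref{lem:baseline_algo_transfer}, any $\ab' \in \SafeSet_\ndata$ lies in $\Rlevel_\tau(\DecSet_\ndata) \times \mathcal{U}_\tau$ and admits a witness $\ab \in \SafeSet_{\ndata-1}$ with $u_\ndata(\ab) + L_v L_f \|\ab - \ab'\|_1 \le \max_{\xb \in \Rlevel(\DecSet_\ndata)} v(\xb)$. Bounding $v(f(\ab)) \le u_\ndata(\ab)$ on the left and enlarging the right-hand maximum via the previous step shows that $\ab'$ meets the reachability condition of $\Reps$ in \cref{math:Reps} at $\epsilon = 0$; the inclusion $\Rlevel_\tau(\DecSet_\ndata) \subseteq \Rlevel_\tau(\Rsafe(\Rbar_0(\SafeSet_0)))$ handles the domain restriction. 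Since the witness satisfies $\ab \in \SafeSet_{\ndata-1} \subseteq \Rbar_0(\SafeSet_0)$, I conclude $\ab' \in \Reps(\Rbar_0(\SafeSet_0)) = \Rbar_0(\SafeSet_0)$, where the last equality is the fixed-point property noted above. This closes the induction.

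I expect the main obstacle to be the bookkeeping across the two nested baseline operators $\Rlevel \circ \Rsafe$: one must ensure that the level-set maximum appearing in the safe-set condition is taken over the baseline's (larger) certified set rather than the algorithm's, which is exactly what monotonicity of $\Rlevel$ and $\Rsafe$ supplies. The confidence-interval inequality $v(f) \le u_\ndata$ is invoked twice — once to pass from $\DecSet_\ndata$ into $\Rsafe$, and once in the reachability condition — and every use is valid only on the event of \cref{cor:vf_contained_in_C}, so the final inclusion holds with probability at least $1-\delta$.
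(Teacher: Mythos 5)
Your proposal is correct and follows essentially the same route as the paper's proof: induction on $\ndata$ on the event of \cref{cor:vf_contained_in_C}, replacing $u_\ndata$ by $v(f(\cdot))$ to show $\DecSet_\ndata \subseteq \Rsafe(\Rbar_0(\SafeSet_0))$, invoking monotonicity of $\Rlevel$, and then matching the definition of $\SafeSet_\ndata$ against $\Reps$ at $\epsilon=0$ together with the fixed-point property of $\Rbar_0$. You are in fact slightly more explicit than the paper about the fixed-point identity $\Reps(\Rbar_0(\SafeSet_0)) = \Rbar_0(\SafeSet_0)$ and the domain restriction $\Rlevel_\tau(\DecSet_\ndata)\times\mathcal{U}_\tau$, which the paper leaves implicit.
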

\begin{proof}
  Proof by induction. For the base case, $\ndata = 0$, we have $\SafeSet_0 \subseteq \Rbar_0(\SafeSet_0)$ by definition.

  For the induction step, assume that for some $\ndata \geq 1$, $\SafeSet_{\ndata - 1} \subseteq \Rbar_0 (\SafeSet_0)$. Let $\ab \in \SafeSet_\ndata$. Then, by definition, $\exists \ab' \in \SafeSet_{\ndata - 1}$ such that
  \begin{equation}
    u_\ndata(\ab') + L_v L_f \|\ab - \ab'\|_1 \leq \max_{\xb \in \Rlevel(\DecSet_{\ndata})} v(\xb),
  \end{equation}
  which, by~\cref{cor:vf_contained_in_C}, implies that
  \begin{equation}
    v(f(\ab')) + L_v L_f \|\ab - \ab'\|_1 \leq \max_{\xb \in \Rlevel(\DecSet_{\ndata})} v(\xb)
  \end{equation}
  Now since~$\ab' \in \Rbar_0(\SafeSet_0)$ by the induction hypothesis, in order to conclude that $\ab \in \Rbar_0(\SafeSet_0)$ we need to show that $\Rlevel(\DecSet_{\ndata}) \subseteq \Rlevel(\Rsafe(\Rbar(\SafeSet_0)))$ .

  Let $(\xb, \ub) \in \DecSet_{\ndata}$, then there exist $(\xb', \ab') \in \SafeSet_{\ndata -1}$ such that
  \begin{equation}
  u_{\ndata - 1}(\xb', \ub') - v(\xb')  + L_{\Delta v} \| (\xb, \ub) - (\xb', \ub') \|_1< -L_{\Delta v} \tau ,
  \end{equation}
  which, by~\cref{cor:vf_contained_in_C}, implies that
  \begin{equation}
    v(f(\xb', \ub')) - v(\xb') + L_{\Delta v} \| (\xb, \ub) - (\xb', \ub') \|_1 < -L_{\Delta v} \tau,
  \end{equation}
  which means that $(\xb, \ub) \in \Rsafe(\Rbar_0(\SafeSet_0))$ since $\SafeSet_{\ndata -1} \subseteq \Rbar_0(\SafeSet_0)$ and therefore~$(\xb', \ub') \in \Rbar_0(\SafeSet_0)$ holds by the induction hypothesis. We use~\cref{lem:prop_Rlevel} to conclude that $\Rlevel(\DecSet_{\ndata}) \subseteq \Rlevel(\Rsafe(\Rbar(\SafeSet_0)))$, which concludes the proof.
\end{proof}

\begin{lemma}
  Let $\ndata^*$ be the smallest integer, such that $\ndata^* \geq | \Rbar_0(\SafeSet_0) | N_{\ndata^*}$. Then, there exists $\ndata_0 \leq \ndata^*$ such that $\SafeSet_{\ndata_0 + N_{\ndata_0}} = \SafeSet_{\ndata_0}$ holds with probability~$1 - \delta$.
\label{lem:safeset_complete_at_some_point}
\end{lemma}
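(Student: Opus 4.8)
The plan is to argue by contradiction using a pigeonhole/counting argument over the finite universe $\Rbar_0(\SafeSet_0)$, exploiting that the safe sets are nested and bounded. Write $m = |\Rbar_0(\SafeSet_0)|$, and recall from \cref{lem:properties_of_sets}\,\cref{lem:prop_s_inc} that $\SafeSet_{\ndata} \supseteq \SafeSet_{\ndata - 1}$, and from \cref{lem:Sn_contained_in_Rbar0} that $\SafeSet_\ndata \subseteq \Rbar_0(\SafeSet_0)$ for all $\ndata$. The latter containment holds with probability $1-\delta$, which is where the probabilistic clause of the statement enters; the remaining steps are deterministic. Suppose, for contradiction, that $\SafeSet_{\ndata_0 + N_{\ndata_0}} \neq \SafeSet_{\ndata_0}$ for every $\ndata_0 \leq \ndata^*$; together with monotonicity this means $\SafeSet_{\ndata_0 + N_{\ndata_0}} \supsetneq \SafeSet_{\ndata_0}$ at every such $\ndata_0$.

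First I would establish the monotonicity $N_{\ndata} \leq N_{\ndata+1}$. Because $\gamma_\ndata$ is a maximum of mutual information over index sets whose size grows with $\ndata$, it is non-decreasing in $\ndata$ (information never hurts), and hence so is $\beta_\ndata$. Consequently, for each fixed $N$ the quantity $N / (\beta_{\ndata+N}^2 \gamma_{\ndata+N})$ is non-increasing in $\ndata$; since $N_\ndata$ is the smallest $N$ pushing this quantity above the fixed threshold $C_1 L_v^2 q / \epsilon^2$ (see \cref{cor:Nstar_definition}), it follows that $N_\ndata$ is non-decreasing. I would also note $N_\ndata \geq 1$, since $N=0$ makes the left-hand side of the defining inequality zero while the threshold is strictly positive.

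Next I would build the chain $t_0 = 0$ and $t_{i+1} = t_i + N_{t_i}$ and show by induction that $t_i \leq i\, N_{\ndata^*} \leq \ndata^*$ for all $0 \leq i \leq m$: the inductive hypothesis $t_i \leq \ndata^*$ gives $N_{t_i} \leq N_{\ndata^*}$ by monotonicity, so $t_{i+1} = t_i + N_{t_i} \leq i N_{\ndata^*} + N_{\ndata^*} = (i+1) N_{\ndata^*}$, which is at most $m N_{\ndata^*} \leq \ndata^*$ whenever $i+1 \leq m$, the last inequality being exactly the defining property $\ndata^* \geq |\Rbar_0(\SafeSet_0)|\, N_{\ndata^*}$. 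Since every $t_i \leq \ndata^*$ and $N_{t_i} \geq 1$, the contradiction hypothesis applies at each $\ndata_0 = t_i$, yielding a strictly increasing chain $\SafeSet_{t_0} \subsetneq \SafeSet_{t_1} \subsetneq \cdots \subsetneq \SafeSet_{t_m}$.

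Finally I would derive the contradiction from cardinalities: a chain of $m$ strict inclusions of subsets of the $m$-element set $\Rbar_0(\SafeSet_0)$ forces $|\SafeSet_{t_m}| \geq |\SafeSet_0| + m > m = |\Rbar_0(\SafeSet_0)|$ (using $\SafeSet_0 \neq \emptyset$), contradicting $\SafeSet_{t_m} \subseteq \Rbar_0(\SafeSet_0)$. Hence some $\ndata_0 \leq \ndata^*$ must satisfy $\SafeSet_{\ndata_0 + N_{\ndata_0}} = \SafeSet_{\ndata_0}$. I expect the main obstacle to be making the chain-bound induction airtight — in particular verifying the monotonicity of $N_\ndata$ and checking that the self-referential definition of $\ndata^*$ (with $N_{\ndata^*}$ appearing on the right-hand side) causes no trouble — rather than the counting itself, which is routine once the chain is shown to fit within $\ndata^*$ iterations.
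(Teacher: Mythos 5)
Your proof is correct and follows essentially the same route as the paper's: a proof by contradiction that combines the monotonicity of the safe sets (\cref{lem:properties_of_sets}\,\cref{lem:prop_s_inc}), their containment in $\Rbar_0(\SafeSet_0)$ (\cref{lem:Sn_contained_in_Rbar0}), and the monotonicity of $N_\ndata$ to build a strictly increasing chain of at least $|\Rbar_0(\SafeSet_0)|$ safe sets within $\ndata^*$ steps, then derives a cardinality contradiction. The only differences are cosmetic: you justify why $N_\ndata$ is non-decreasing (the paper merely asserts it) and index your chain adaptively via $t_{i+1}=t_i+N_{t_i}$ rather than in fixed steps of $N_{\ndata^*}$.
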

\begin{proof}
  By contradiction. Assume, to the contrary, that for all $\ndata \leq \ndata^* $, $\SafeSet_\ndata \subset \SafeSet_{\ndata + N_\ndata}$. From~\cref{lem:properties_of_sets} \cref{lem:prop_s_inc} we know that $\SafeSet_\ndata \subseteq \SafeSet_{\ndata + N_\ndata}$. Since $N_\ndata$ is increasing in $\ndata$, we have that $N_\ndata \leq N_{\ndata^*}$. Thus, we must have
  \begin{equation}
    \SafeSet_0 \subset \SafeSet_{N_{\ndata^*}} \subset \SafeSet_{2 N_{\ndata^*}} \cdots,
  \end{equation}
  so that for any $0\leq j \leq |\Rbar_0(\SafeSet_0)|$, it holds that $|S_{j T_{\ndata^*}} | > j$. In particular, for $j = | \Rbar_0(\SafeSet_0) |$, we get
  \begin{equation}
    | \SafeSet_{j N_{\ndata^*}} | > |\Rbar_0 (\SafeSet_0) |,
  \end{equation}
  which contradicts $\SafeSet_{j N_{\ndata^*}} \subseteq \Rbar_0(\SafeSet_0)$ from~\cref{lem:Sn_contained_in_Rbar0}.
\end{proof}

\begin{corollary}
  Let $\ndata^*$ be the smallest integer such that
  \begin{equation}
    \frac{\ndata^*}{\beta_{\ndata^*} \gamma_{\ndata^*}} \geq
    \frac{C_1 L_v^2 q |\Rbar_0(\SafeSet_0)|}{\epsilon^2},
  \end{equation}
  then there exists a $\ndata_0 \leq \ndata^*$ such that $\SafeSet_{\ndata_0 + N_{\ndata_0}} = \SafeSet_{\ndata_0}$.
\label{cor:exploration}
\end{corollary}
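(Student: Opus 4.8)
The plan is to reduce this corollary to \cref{lem:safeset_complete_at_some_point}, whose hypothesis is the implicit bound $\ndata^* \geq |\Rbar_0(\SafeSet_0)|\, N_{\ndata^*}$ and whose conclusion is exactly the desired existence of an $\ndata_0 \leq \ndata^*$ with $\SafeSet_{\ndata_0 + N_{\ndata_0}} = \SafeSet_{\ndata_0}$. It therefore suffices to show that the closed-form inequality defining $\ndata^*$ here forces that implicit bound. Writing $M = |\Rbar_0(\SafeSet_0)|$ and $\alpha = C_1 L_v^2 q / \epsilon^2$, and reading the denominator as $\beta_{\ndata^*}^2$ for consistency with the definition of $N_\ndata$, the hypothesis is $\ndata^* \geq M\alpha\, \beta_{\ndata^*}^2 \gamma_{\ndata^*}$ and the target is $\ndata^* \geq M N_{\ndata^*}$.

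First I would unfold the definition of $N_{\ndata^*}$ from \cref{cor:Nstar_definition}: it is the smallest integer $N$ with $N \geq \alpha\, \beta_{\ndata^* + N}^2 \gamma_{\ndata^* + N}$. By minimality the integer $N_{\ndata^*}-1$ fails this inequality, so using that $\beta_\ndata$ and $\gamma_\ndata$ are nondecreasing I obtain the upper bound $N_{\ndata^*} \leq 1 + \alpha\, \beta_{\ndata^* + N_{\ndata^*}}^2 \gamma_{\ndata^* + N_{\ndata^*}}$. Multiplying by $M$, the goal $\ndata^* \geq M N_{\ndata^*}$ reduces to bounding $M\alpha\, \beta_{\ndata^* + N_{\ndata^*}}^2 \gamma_{\ndata^* + N_{\ndata^*}}$ by $\ndata^*$, which is precisely what the hypothesis supplies once the shifted evaluation index $\ndata^* + N_{\ndata^*}$ is brought back to $\ndata^*$.

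The main obstacle is exactly this index shift. Since $\beta_\ndata$ and $\gamma_\ndata$ are nondecreasing in $\ndata$, we have $\beta_{\ndata^* + N_{\ndata^*}}^2 \gamma_{\ndata^* + N_{\ndata^*}} \geq \beta_{\ndata^*}^2 \gamma_{\ndata^*}$, so monotonicity points the wrong way for a naive substitution and the two indices cannot simply be identified. I would close the gap with a bootstrap: provisionally assuming $N_{\ndata^*} \leq \ndata^*/M \leq \ndata^*$ gives $\ndata^* + N_{\ndata^*} \leq 2\ndata^*$, and the sublinear growth of $\gamma_\ndata$ (invoked already in the discussion after \cref{eq:information_capacity}) ensures that $\beta_{2\ndata^*}^2 \gamma_{2\ndata^*}$ exceeds $\beta_{\ndata^*}^2 \gamma_{\ndata^*}$ only by a bounded, kernel-dependent factor for the common kernels. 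The same sublinearity guarantees $\ndata/(\beta_\ndata^2 \gamma_\ndata) \to \infty$, which is what makes the smallest $\ndata^*$ in the statement well defined. Feeding the controlled ratio back into the upper bound on $N_{\ndata^*}$ and combining with the hypothesis yields $M N_{\ndata^*} \leq \ndata^*$, at which point \cref{lem:safeset_complete_at_some_point} delivers the claim with probability at least $1 - \delta$.
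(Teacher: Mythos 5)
Your reduction coincides with the paper's: the published proof of this corollary is a single line declaring it a direct consequence of \cref{lem:safeset_complete_at_some_point} and \cref{cor:Nstar_definition}, so the only mathematical content is the implication from the closed-form threshold to $\ndata^* \geq |\Rbar_0(\SafeSet_0)|\,N_{\ndata^*}$. You are right to flag that this implication is not immediate: $N_{\ndata^*}$ is defined through $\beta^2_{\ndata^*+N}\gamma_{\ndata^*+N}$ at the \emph{shifted} index, the hypothesis only controls $\beta^2_{\ndata^*}\gamma_{\ndata^*}$, and monotonicity of $\beta_\ndata$ and $\gamma_\ndata$ points the wrong way. The paper silently identifies the two indices; you have located the one step that actually needs an argument.

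That said, your patch does not quite close it. First, the bootstrap is both unnecessary and, as phrased, circular: to bound $N_{\ndata^*}$ from above you should simply test the candidate $K=\lfloor \ndata^*/M\rfloor$ (with $M=|\Rbar_0(\SafeSet_0)|$) in the defining inequality of \cref{cor:Nstar_definition}; since $K\leq \ndata^*$, the evaluation index is at most $2\ndata^*$ with no provisional assumption, and if $K$ satisfies the inequality then $N_{\ndata^*}\leq K$ by minimality. Second, and more substantively, the comparison $\beta^2_{2\ndata^*}\gamma_{2\ndata^*}\leq c\,\beta^2_{\ndata^*}\gamma_{\ndata^*}$ costs a kernel-dependent constant $c>1$ that is not present in the statement, so what your argument proves is the corollary with $C_1$ replaced by $c\,C_1$, under an additional doubling hypothesis on $\gamma_\ndata$ (e.g.\ $\gamma_{2\ndata}\leq c'\gamma_\ndata$) that the ``sublinear for common kernels'' remark is standing in for but which is nowhere assumed. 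This is a reasonable repair --- the constant is absorbed into $C$ in \cref{thm:exploration_guarantees} anyway --- but it should be presented as a correction with modified constants, not as a derivation of the threshold as written; the paper's own one-line proof does not supply the missing step either.
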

\begin{proof}
  A direct consequence of~\cref{lem:safeset_complete_at_some_point,cor:Nstar_definition}.
\end{proof}

  %

\subsection{Safety and policy adaptation}

In the following, we denote the true region of attraction of~\cref{eq:dynamic_system} under a policy~$\pi$ by~$\mathcal{R}_\pi$.

\begin{lemma}
  $\Rlevel(\DecSet_\ndata) \subseteq \mathcal{R}_{\pi_n}$ for all $n \geq 0$.
\label{lem:main_rlev_D_contained_in_roa}
\end{lemma}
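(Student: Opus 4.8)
The plan is to reduce the claim to the classical Lyapunov statement \cref{thm:region_of_attraction}. By \cref{lem:baseline_algo_transfer} we have $\Rlevel(\DecSet_\ndata) = \mathcal{V}(c_\ndata)$, and by the definition of the level-set operator \cref{eq:level_set_operator} (equivalently the optimization \cref{eq:policy_update}) the maximizing policy $\pi_\ndata$ satisfies $(\xb, \pi_\ndata(\xb)) \in \DecSet_\ndata$ for every discrete state $\xb \in \mathcal{V}(c_\ndata) \cap \mathcal{X}_\tau$. It therefore suffices to certify the strict one-step decrease $v(f(\xb, \pi_\ndata(\xb))) < v(\xb)$ on all of $\mathcal{V}(c_\ndata)$; \cref{thm:region_of_attraction} then yields $\mathcal{V}(c_\ndata) = \Rlevel(\DecSet_\ndata) \subseteq \mathcal{R}_{\pi_\ndata}$.

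First I would turn membership in the Lipschitz-generalized set \cref{eq:gp_dec_set} into a decrease bound for the \emph{true} dynamics at the discrete states. Fix $\xb \in \mathcal{V}(c_\ndata) \cap \mathcal{X}_\tau$ and write $\ab = (\xb, \pi_\ndata(\xb))$. Unpacking $\ab \in \DecSet_\ndata$ produces an anchor $\bar\ab = (\bar\xb, \bar\ub) \in \SafeSet_{\ndata-1}$ with $u_\ndata(\bar\ab) - v(\bar\xb) + L_{\Delta v}\|\ab - \bar\ab\|_1 < -L_{\Delta v}\tau$. Replacing the upper bound by the true value through $v(f(\bar\ab)) \le u_\ndata(\bar\ab)$ (\cref{cor:vf_contained_in_C}, which holds jointly for all $\ndata$ with probability at least $1-\delta$) and telescoping, I would bound $v(f(\ab)) - v(\xb)$ by $[v(f(\bar\ab)) - v(\bar\xb)] + L_v L_f\|\ab - \bar\ab\|_1 + L_v\|\xb - \bar\xb\|_1$, using that $v \circ f$ is $L_v L_f$-Lipschitz and $v$ is $L_v$-Lipschitz (\cref{as:lipschitz_continuity}). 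Since $\|\xb - \bar\xb\|_1 \le \|\ab - \bar\ab\|_1$ and $L_v L_f + L_v \le L_{\Delta v}$, this collapses to $v(f(\xb, \pi_\ndata(\xb))) - v(\xb) < -L_{\Delta v}\tau$ at every discrete state in the level set.

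Next I would pass from the grid back to the continuum exactly as in \cref{thm:gp_region_of_attraction}. For a continuous $\xb \in \mathcal{V}(c_\ndata)$ I anchor at its nearest grid point and bound $v(f(\xb,\pi_\ndata(\xb))) - v(\xb)$ by the discrete decrease at $({[\xb]}_\tau, \pi_\ndata({[\xb]}_\tau))$ plus Lipschitz remainders; using $\|\xb - {[\xb]}_\tau\|_1 \le \tau$ together with the policy-class estimate $\|(\xb,\pi_\ndata(\xb)) - ({[\xb]}_\tau, \pi_\ndata({[\xb]}_\tau))\|_1 \le (L_\pi + 1)\tau$, the remainder is at most $[L_v L_f(L_\pi + 1) + L_v]\tau = L_{\Delta v}\tau$. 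The $-L_{\Delta v}\tau$ margin from the previous step cancels this exactly, giving $v(f(\xb,\pi_\ndata(\xb))) < v(\xb)$ for all $\xb \in \mathcal{V}(c_\ndata)$, which is the hypothesis of \cref{thm:region_of_attraction}.

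The main obstacle is the first step. Unlike the fixed-policy version, where $\DecSet_\ndata$ from \cref{eq:gp_dec_set_nice} is literally the set of pairs with $u_\ndata(\xb,\ub) - v(\xb) < -L_{\Delta v}\tau$ and one may cite \cref{thm:gp_region_of_attraction} verbatim, the exploration set \cref{eq:gp_dec_set} certifies a pair through a Lipschitz ball around a neighbor in $\SafeSet_{\ndata-1}$ rather than through the local upper bound $u_\ndata$ at the pair itself; invoking \cref{thm:gp_region_of_attraction} directly would wrongly require $u_\ndata$ to be Lipschitz. The resolution is to argue on the true function $v \circ f$ via \cref{cor:vf_contained_in_C} and to exploit that $L_{\Delta v} = L_v L_f(L_\pi+1) + L_v$ is chosen precisely so that the Lipschitz contributions of $f$, of the policy, and of the state coordinate all fit under it with room to spare for the discretization. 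I would also flag the same boundary caveat as in \cref{thm:gp_region_of_attraction}: the nearest-neighbor anchoring presumes ${[\xb]}_\tau$ lies in the certified level set, which is treated identically there.
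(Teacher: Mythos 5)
Your proposal is correct and follows essentially the same route as the paper's proof: unpack membership in $\DecSet_\ndata$ via an anchor in $\SafeSet_{\ndata-1}$, replace $u_\ndata$ by $v(f(\cdot))$ using \cref{cor:vf_contained_in_C}, telescope with the Lipschitz constants to get the $-L_{\Delta v}\tau$ decrease on the true dynamics at discrete states, and then pass to the continuum. The only difference is cosmetic: the paper compresses your final grid-to-continuum step into a citation of \cref{thm:gp_region_of_attraction}, whereas you (reasonably) spell it out because that theorem's hypothesis is phrased in terms of $u_\ndata$ rather than $v(f(\cdot))$.
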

\begin{proof}
  By definition, we have for all $(\xb, \ub) \in \DecSet_\ndata$ that the exists $(\xb', \ub') \in \SafeSet_{\ndata-1}$ such that
  \begin{align*}
    -L_{\Delta v} \tau &\geq u_\ndata(\xb', \ub') - v(\xb') + L_{\Delta v} \|(\xb, \ub) - (\xb', \ub') \|_1, \\
    &\geq v(f(\xb', \ub')) - v(\xb') + L_{\Delta v} \|(\xb, \ub) - (\xb', \ub') \|_1, \\
    &\geq v(f(\xb, \ub)) - v(\xb),
  \end{align*}
  where the first inequality follows from~\cref{cor:vf_contained_in_C} and the second one by Lipschitz continuity, see~\cref{lem:v_decrease_confidence}.

  By definition of~$\Rlevel$ in~\cref{eq:level_set_operator}, it follows that for all $\xb \in \Rlevel(\DecSet_\ndata) \cap \mathcal{X}_\tau$ we have that $(\xb, \pi_n(\xb)) \in \DecSet_\ndata$. Moreover,~$\Rlevel(\DecSet_\ndata)$ is a level set of the Lyapunov function by definition. Thus the result follows from~\cref{thm:gp_region_of_attraction}.
\end{proof}

\begin{lemma}
    $f(\xb, \ub) \in \mathcal{R}_{\pi_\ndata}\, \forall (\xb, \ub) \in \SafeSet_\ndata$.
\label{lem:main_x_next_contained_in_roa}
\end{lemma}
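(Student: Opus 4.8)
The plan is to reduce the claim to the level-set containment already established in~\cref{lem:main_rlev_D_contained_in_roa}. Since that lemma, together with~\cref{lem:baseline_algo_transfer} which identifies~$\mathcal{V}(c_\ndata) = \Rlevel(\DecSet_\ndata)$, gives~$\mathcal{V}(c_\ndata) \subseteq \mathcal{R}_{\pi_\ndata}$, it suffices to show that every successor state~$f(\xb, \ub)$ with~$(\xb, \ub) \in \SafeSet_\ndata$ lands inside the level set~$\mathcal{V}(c_\ndata)$, i.e.\ that~$v(f(\xb, \ub)) \leq c_\ndata$.

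First I would unpack the definition of~$\SafeSet_\ndata$ in~\cref{eq:gp_safe_set}: for any~$\ab' = (\xb, \ub) \in \SafeSet_\ndata$ there exists an anchor point~$\ab \in \SafeSet_{\ndata - 1}$ satisfying~$u_\ndata(\ab) + L_v L_f \|\ab - \ab'\|_1 \leq c_\ndata$. The key step is then a Lipschitz-chaining argument on~$v \circ f$: using that~$v$ is~$L_v$-Lipschitz and~$f$ is~$L_f$-Lipschitz (both with respect to the 1-norm), we have
\begin{equation*}
  v(f(\ab')) \leq v(f(\ab)) + L_v L_f \|\ab - \ab'\|_1.
\end{equation*}
By~\cref{cor:vf_contained_in_C} the value~$v(f(\ab))$ lies in the confidence set~$\mathcal{C}_\ndata(\ab)$ and is hence bounded above by~$u_\ndata(\ab)$. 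Substituting this and invoking the defining inequality of~$\SafeSet_\ndata$ yields
\begin{equation*}
  v(f(\ab')) \leq u_\ndata(\ab) + L_v L_f \|\ab - \ab'\|_1 \leq c_\ndata,
\end{equation*}
so~$f(\ab') \in \mathcal{V}(c_\ndata)$, and the conclusion follows from the containment above.

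The argument is short, so I expect the only delicate points to be bookkeeping rather than conceptual. The main one is making sure the confidence bound is applied at the correct anchor~$\ab \in \SafeSet_{\ndata-1}$ rather than at~$\ab'$ itself: the safe set is deliberately defined indirectly through a known-safe anchor plus Lipschitz slack, so the uncertainty~$u_\ndata$ must be evaluated at~$\ab$ and transported to~$\ab'$ through the Lipschitz term. A second, minor point is the edge case~$f(\ab') = \mb{0}$, since~$\mathcal{V}(c_\ndata)$ as defined excludes the origin; this is harmless because the origin is always contained in~$\mathcal{R}_{\pi_\ndata}$ by the definition of a region of attraction, so~$f(\ab') \in \mathcal{R}_{\pi_\ndata}$ holds regardless. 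Finally, all probabilistic content is inherited from~\cref{cor:vf_contained_in_C}, so the statement holds jointly with probability at least~$1 - \delta$.
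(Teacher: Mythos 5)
Your proof is correct and follows essentially the same route as the paper's: unpack the defining inequality of~$\SafeSet_\ndata$ at the anchor in~$\SafeSet_{\ndata-1}$, upper-bound~$v(f(\cdot))$ at the anchor by~$u_\ndata$ via~\cref{cor:vf_contained_in_C}, transport to the new point by Lipschitz continuity of~$v \circ f$ to conclude~$v(f(\xb,\ub)) \leq c_\ndata = \max_{\xb \in \Rlevel(\DecSet_\ndata)} v(\xb)$, and finish with~\cref{lem:main_rlev_D_contained_in_roa}. The only (harmless) difference is that the paper explicitly dispatches the~$\ndata = 0$ case by the definition of~$\SafeSet_0$, while you instead flag the origin edge case, which the paper leaves implicit.
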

\begin{proof}
  This holds for~$\SafeSet_0$ by definition. For~$\ndata \geq 1$, by defition, we have for all $\ab \in \SafeSet_n$ there exists an $\ab' \in \SafeSet_{\ndata - 1}$ such that
  \begin{align*}
    \max_{\xb \in \Rlevel(\DecSet_{\ndata})} v(\xb) &\geq u_n(\ab') + L_v L_f \| \ab - \ab' \|_1 \\
    &\geq v(f(\ab')) + L_v L_f \| \ab - \ab' \|_1 \\
    &\geq v(f(\ab))
  \end{align*}
  where the first inequality follows from~\cref{cor:vf_contained_in_C} and the second one by Lipschitz continuity, see~\cref{lem:v_decrease_confidence}. Since $\Rlevel(\DecSet_{\ndata}) \subseteq \mathcal{R}_{\pi_n}$ by~\cref{lem:main_rlev_D_contained_in_roa}, we have that $f(\ab) \in \mathcal{R}_{\pi_n}$.
\end{proof}

\maintheorem*
\begin{proof}
  See~\cref{lem:main_rlev_D_contained_in_roa,lem:main_x_next_contained_in_roa} for~\cref{thm:main_rlev_D_contained_in_roa,thm:main_x_next_contained_in_roa}, respectively. Part~\cref{thm:main_achieve_baseline} is a direct consequence of~\cref{cor:exploration} and~\cref{lem:Sn_contained_in_Rbar0}.
\end{proof}

\end{document}